\definecolor{Gred}{RGB}{219, 50, 54}
\definecolor{ToCgreen}{RGB}{0, 128, 0}
\DeclareMathAlphabet{\pazocal}{OMS}{zplm}{m}{n} 
\renewcommand{\mathcal}[1]{\pazocal{#1}}
\newcommand{\R}{\mathbb{R}}
\newcommand{\E}{\mathbb{E}}
\renewcommand\paragraph{\@startsection{paragraph}{4}{\z@}%
                                    {1.2ex \@plus1ex \@minus.2ex}%
                                    {-1em}%
                                    {\normalfont\normalsize\bfseries}}
\newtheorem{theorem}{Theorem}
\newtheorem{definition}{Definition}
\newtheorem{lemma}{Lemma}
\newtheorem{example}{Example}
\newtheorem{assumption}{Assumption}
\newcommand{\x}{x}
\newcommand{\iprod}[1]{\langle #1 \rangle}
\newcommand{\ve}{\varepsilon}
\newcommand{\hx}{\wh{x}}
\newcommand{\hv}{\wh{v}}
\newcommand{\hf}{\wh{f}}
\renewcommand{\norm}[1]{\|#1\|}
\newcommand{\optdenoiser}[1]{f^*_{#1}}
\newcommand{\htau}{\widehat{\tau}}
\definecolor{Gblue}{RGB}{40, 30, 255}
\definecolor{Gyellow}{RGB}{247, 178, 16}
\newcommand{\wh}{\widehat}
\newcommand{\bv}{v}
\newcommand{\y}{y}
\newcommand{\A}{A}
\newcommand{\bdeta}{\varsigma}
\newcommand{\normal}{\mathcal N}
\newcommand{\Id}{\mathrm{Id}}
\newcommand{\B}{B}
\newcommand{\G}{G}
\newcommand{\Y}{Y}
\newcommand{\X}{\ensuremath{\bm{X}}}
\newcommand{\Z}{\ensuremath{\bm{Z}}}
\renewcommand{\epsilon}{\varepsilon}
\newcommand{\learneddenoiser}[1]{\hf_{#1}}
\newcommand{\complexity}{\mathfrak{C}_{\epsilon}}
\newcommand{\samplecomplexity}{\mathfrak{C}_{s}}
\newcommand{\poly}{\mathrm{poly}}
\newcommand{\varrm}{\mathrm{var}}
\title{Unrolled denoising networks provably learn optimal Bayesian inference}
\author{
Aayush Karan\thanks{Equal contribution} \hspace{0.01em} \thanks{Email: \href{mailto:akaran1@g.harvard.edu}{akaran1@g.harvard.edu}}\\
\small Harvard SEAS \\ 
\and
Kulin Shah\footnotemark[1] \hspace{0.01em} \thanks{Email: \href{mailto:kulin@cs.utexas.edu}{kulinshah@utexas.edu}} \\
\small UT Austin \\
\and
Sitan Chen \thanks{Email: \href{mailto:sitan@seas.harvard.edu}{sitan@seas.harvard.edu}} \\
\small Harvard SEAS \\
\and 
Yonina C. Eldar \thanks{Email: \href{mailto:yonina.eldar@weizmann.ac.il}{yonina.eldar@weizmann.ac.il}} \\
\small Weizmann Institute of Science \\
}
\date{September 19, 2024}
\begin{document}

\maketitle

\begin{abstract}
    Much of Bayesian inference centers around the design of estimators for inverse problems which are optimal assuming the data comes from a known prior. But what do these optimality guarantees mean if the prior is unknown? In recent years, algorithm unrolling has emerged as deep learning's answer to this age-old question: design a neural network whose layers can in principle simulate iterations of inference algorithms and train on data generated by the unknown prior.  Despite its empirical success, however, it has remained unclear whether this method can provably recover the performance of its optimal, prior-aware counterparts.

    In this work, we prove the first rigorous learning guarantees for neural networks based on unrolling approximate message passing (AMP). For compressed sensing, we prove that when trained on data drawn from a product prior, the layers of the network approximately converge to the same denoisers used in Bayes AMP. We also provide extensive numerical experiments for compressed sensing and rank-one matrix estimation demonstrating the advantages of our unrolled architecture \--- in addition to being able to obliviously adapt to general priors, it exhibits improvements over Bayes AMP in more general settings of low dimensions, non-Gaussian designs, and non-product priors.
    
\end{abstract}

\clearpage

\tableofcontents
\clearpage


\newpage

\section{Introduction}

Inverse problems within engineering and the sciences~\cite{bertero2021introduction,snieder1999inverse,vogel2002computational} have inspired the development of a rich toolbox of algorithms for inferring unknown signals given noisy measurements. For instance, a classic approach to solving sparse linear inverse problems is to solve the LASSO using an iterative algorithm like ISTA~\cite{daubechies2004iterative} or FISTA~\cite{beck2009fast}. While these methods are easy to implement and remarkably performant, they are not designed to exploit \emph{distributional} information about the underlying signal, which often comes from domain knowledge. In contrast, Bayesian methods like message passing and variational inference offer a natural framework for designing estimators that incorporate this kind of information: the algorithm designer crafts a \emph{prior} for the signal, and the measurements they observe naturally induce a \emph{posterior} over what the underlying signal could have been.

Such an approach comes at a cost. On one hand, this method often comes with strong optimality guarantees in the well-specified setting where the algorithm designer has access to the true prior distribution of the data. In practice, however, this prior is not known \textit{a priori} and hence must be inferred, and any mismatch between the inferred prior and the true distribution will adversely affect the performance of the estimator in ways that are poorly understood compared to what is known in the well-specified setting~\cite{barbier2021performance,barbier2022price,mukherjee2022variational}.

In recent years, \emph{algorithm unrolling} has emerged as a scalable solution for developing estimators that can improve upon this practicality-performance tradeoff by learning from samples drawn from the data distribution~\cite{gregor2010learning,monga2021algorithm,li2020efficient,shlezinger2023model}. The idea is to craft a neural network architecture, each of whose layers is expressive enough to implement one step of an existing, hand-crafted iterative algorithm (e.g. ISTA). Then, instead of explicitly setting the weights of the network so that it implements that algorithm, one trains the network on examples of the inference problem at hand using stochastic gradient descent. Remarkably, the algorithm that the network converges to tends to perform at least as well as (and often better than) the hand-crafted algorithm being unrolled, e.g. in the number of layers and iterations necessary to achieve a certain level of error.

Thus, at least empirically, algorithm unrolling seems to achieve the best of both worlds, marrying the domain-aware power of classical iterative methods with the remarkable learning capabilities of neural networks. From a theoretical perspective however, our understanding of its performance is rather limited, as existing guarantees are centered around non-algorithmic aspects like representational power and generalization bounds (see Section~\ref{sec:related} for a detailed discussion).

In particular, the following fundamental learning question remains open:
\begin{center}
    {\em Can an unrolled network trained with stochastic gradient descent provably obtain an estimator competitive with the best prior-aware algorithms?}
\end{center}
In this work, we give the first rigorous learning guarantees for this question, focusing on the well-studied setting of \emph{compressed sensing} (see Section \ref{sec:amp_overview} for definitions). In addition, we provide the first empirical evidence in the affirmative for the problem of \textit{rank-one matrix estimation} (Section \ref{sec:r1pca}).

\bigskip

\noindent\textbf{Approximate message passing and unrolling.} Consider the standard Bayesian setup where we observe a noisy measurement $y$ of some signal $x$, and would like to output an estimate $\wh{x}$ minimizing $\E\norm{x - \wh{x}}^2$. Information-theoretically, the Bayes-optimal estimator for this task is the posterior mean $\E[x\mid y]$, but in many settings of interest this estimator may not be computable by a polynomial-time algorithm. Among computationally efficient estimators, for a wide variety of such inference tasks it is conjectured~\cite{montanari2022statistically,celentano2020estimation,montanari2022equivalence,barbier2019optimal} that a certain family of iterative algorithms called \emph{approximate message passing (AMP)} is optimal. We give a self-contained exposition of this method in Section~\ref{sec:bayesamp}. Roughly speaking, one can think of AMP as a more advanced version of ISTA where the denoiser at each step can be tuned depending on the prior, and additionally there is a crucial momentum term inspired by a correction from statistical physics~\cite{thouless1977solution}. AMP with the optimal tuning of the denoiser is called \emph{Bayes AMP}.

Motivated by the appealing theoretical properties of AMP, in this work we investigate the training dynamics of neural networks given by unrolling this algorithm. In place of the prior-dependent denoisers $\eta_1,\eta_2,\ldots$, we consider generic denoisers given by \emph{neural networks} $\wh{f}_1,\wh{f}_2,\ldots$ and unroll the iterations of AMP into layers of a neural network (see Section~\ref{sec:arch} for details). For the theoretical results in this work, we focus on the setting where the only trainable parameters in the network are the ones parametrizing the denoisers $\wh{f}_\ell$.

Unrolled AMP architectures and variants thereof were originally proposed and empirically investigated by Borgerding et al.~\cite{borgerding2016onsager,borgerding2017amp} and follow-ups~\cite{metzler2017learned,musa2021plug,ito2019trainable}. These works found that unrolled AMP can significantly outperform unrolled ISTA as well as a version of AMP with soft threshholding denoisers in terms of convergence; i.e., the number of layers needed to achieve a certain MSE.

Despite these compelling experimental results, to our knowledge, there is still little understanding as to whether these networks can actually recover the performance of Bayes AMP. The main theoretical result of this work is to give the first proof that unrolled AMP networks trained with gradient descent converge to the same denoisers as Bayes AMP and thus achieve mean squared error which is conjectured to be optimal among all polynomial-time algorithms for compressed sensing:

\begin{theorem}[Informal, see Theorem~\ref{thm:formal-thm}]
    For compressed sensing with Gaussian sensing matrix, if the prior on the signal is a product distribution with smooth, sub-Gaussian marginals, then an unrolled network based on AMP which is trained with gradient descent on polynomially many samples will converge in polynomially many iterations to an estimator which, in the infinite-dimensional limit, achieves the same mean squared error as Bayes AMP.
\end{theorem}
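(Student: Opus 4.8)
The plan is to reduce the analysis of gradient descent on the unrolled network to a sequence of decoupled \emph{scalar} denoising problems, one per layer, using the state-evolution characterization of AMP as the bridge. I will treat layer-wise (greedy) training: fit $\wh{f}_1$ to convergence on the layer-$1$ reconstruction loss $\tfrac1n\norm{\wh{f}_1(r_1)-x}^2$, freeze it, fit $\wh{f}_2$ on $\tfrac1n\norm{\wh{f}_2(r_2)-x}^2$, and so on, where $r_\ell$ denotes the effective observation fed into layer $\ell$; joint end-to-end training is a harder variant I return to below. I will also use that each denoiser is parametrized so that its trainable parameters enter linearly over a fixed feature family rich enough to approximate smooth functions on a bounded domain (e.g.\ a polynomial or spline basis), so that the per-layer empirical squared loss is a convex quadratic and its gradient, and the denoiser derivative entering the Onsager term, are available in closed form. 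Under these conventions the argument splits into (i) a \emph{robust state-evolution} step identifying the population objective each layer is actually optimizing, and (ii) a \emph{per-layer learning} step controlling optimization and generalization for that objective.

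For step (i), the key claim is a non-asymptotic, perturbation-stable version of the AMP state evolution: if layers $1,\dots,\ell-1$ use denoisers $\delta$-close (in $L^2$ of the law of $X+\sqrt{\tau}\,G$, together with their derivatives) to reference denoisers producing state-evolution variances $\tau_1,\dots,\tau_\ell$, then the empirical joint law of $\{(x_j,(r_\ell)_j)\}_{j\le n}$ is close in Wasserstein distance to the law of $(X,\,X+\sqrt{\tau_\ell}\,G)$ with $X$ drawn from the marginal $\mu$ and $G\sim\normal(0,1)$, up to an error that is $O(\delta)$ plus a finite-$n$ fluctuation vanishing as $n\to\infty$. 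This follows by combining the standard finite-sample AMP analysis (concentration of the state-evolution recursion) with a Lipschitz-stability estimate for that recursion as a function of the chosen denoisers; the fact that the learned denoisers are data-dependent random functions is handled by using fresh samples (in particular fresh Gaussian sensing matrices) for each layer, so that when the frozen earlier layers are pushed through new data they are statistically independent of it, and by exploiting that the linear-in-features denoisers are automatically Lipschitz with controlled norm. The upshot is that in the population $n\to\infty$ limit layer $\ell$ is minimizing $\E_{X,G}\bigl[(\wh{f}_\ell(X+\sqrt{\tau_\ell}\,G)-X)^2\bigr]$ over the denoiser class, whose unconstrained minimizer over all measurable functions is precisely the scalar posterior-mean denoiser $v\mapsto\E[X\mid X+\sqrt{\tau_\ell}\,G=v]$ used by Bayes AMP at step $\ell$.

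For step (ii): \textbf{(a) Expressivity} --- smoothness and sub-Gaussianity of $\mu$ make $\eta^*_\tau(v)=\E[X\mid X+\sqrt{\tau}\,G=v]$ a $C^\infty$ function with $\E[\eta^*_\tau(V)^2]\le\E[X^2]<\infty$ (it is a ratio of Gaussian convolutions), hence $\delta$-approximable within the feature family; \textbf{(b) Optimization} --- convexity of the per-layer empirical loss gives gradient descent a $\poly(1/\delta)$-iteration convergence to its global minimizer; \textbf{(c) Generalization} --- uniform convergence over the finite-dimensional norm-bounded parameter set shows $\poly(n,1/\delta)$ samples make the empirical minimizer $\delta$-close to the population minimizer, which by (a) is $\delta$-close to $\eta^*_{\tau_\ell}$. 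Chaining (i) and (ii) by induction on $\ell$ --- the base case $\tau_1$ being pinned down by the initialization --- shows that after training all $T$ layers every $\wh{f}_\ell$ lies within $\delta_\ell$ of the corresponding Bayes-AMP denoiser, with the $\delta_\ell$ growing only polynomially in $T$ once the propagation of step (i) is controlled. Finally, continuity of the state-evolution map and of the terminal MSE functional in the denoisers, together with the exact agreement (as $n\to\infty$) of the trained network's empirical MSE with its state-evolution prediction, yields that its asymptotic MSE equals that of Bayes AMP run for the same number of iterations.

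The main obstacle is step (i): making the perturbed state evolution rigorous in the presence of learned, data-dependent denoisers. Two points are delicate. First, the error-propagation bound must be tight enough that the $\delta_\ell$ do not blow up exponentially in the layer count $T$; this seems to require exploiting structure of the Bayes-AMP trajectory --- monotonicity/contractivity of the MMSE state-evolution recursion near its fixed point --- rather than a generic Gr\"onwall-type argument. Second, one must reconcile the circularity that the AMP analysis wants the denoiser fixed in advance and independent of the randomness, whereas here it is the output of training: the fresh-samples device resolves this for the frozen layers, but one must check that the sample and iteration budgets stay polynomial once fresh data is consumed at every layer and every gradient step, and that re-using a frozen earlier layer on the later layers' fresh data does not reintroduce harmful correlations. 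For joint end-to-end training, the additional difficulty is showing that backpropagation through layers $\ell+1,\dots,T$ does not steer layer $\ell$ away from the per-layer optimum --- plausibly true because of the variational characterization of Bayes AMP (each MMSE denoiser simultaneously minimizes all downstream state-evolution variances), but this requires its own argument. Everything else --- expressivity of the denoiser class, convexity of the per-layer loss, standard generalization bounds, and continuity of the MSE --- is comparatively routine.
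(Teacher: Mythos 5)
Your skeleton matches the paper's: layerwise training on per-layer reconstruction loss with fresh samples at each layer to break the dependence between learned denoisers and the data they are later evaluated on, state evolution to collapse the $d$-dimensional objective at layer $\ell$ to the scalar problem of predicting $\X$ from $\X+\tau_\ell\Z$ (whose population minimizer is the Bayes-AMP posterior mean), and an induction over layers driven by a Lipschitz-stability estimate for the state-evolution recursion in the denoisers. Two components differ in substance. First, you replace the MLP denoiser by a linear-in-parameters feature model so the per-layer loss is a convex quadratic; the paper keeps a one-hidden-layer ReLU network and invokes Theorem 1 of \cite{allen2019learning} (an NTK-regime guarantee for gradient descent on wide ReLU nets, with width, sample, and iteration bounds governed by a polynomial-approximation complexity of the target denoiser --- this is where the score-Lipschitzness of Assumption~\ref{asm:smoothed-p}, Jackson's theorem, and sub-Gaussian truncation of the input enter). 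Your substitution assumes away the non-convex optimization question that is the content of the ``trained with gradient descent'' clause, so strictly you prove the statement for a different architecture; since the paper only trains first-layer weights in the NTK regime the gap is smaller than it looks, but it is the one genuinely nontrivial ingredient you have not supplied. Second, you ask for a non-asymptotic ``robust state evolution'' in Wasserstein distance and worry that Gr\"onwall-type propagation blows up exponentially in the depth $L$, suggesting contractivity of the MMSE recursion is needed to avoid it. The paper does neither: it works in the $d\to\infty$ limit (the final bound carries an $o_d(1)$ term), applies the standard state evolution of \cite{bayati2011dynamics} verbatim to the learned denoisers, and pushes all perturbation analysis into the scalar recursion for $|\widehat{\tau}_\ell-\tau_\ell|$ via a score perturbation lemma (the optimal denoiser is Lipschitz in the noise scale when the smoothed score is Lipschitz, via Tweedie's formula). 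The resulting constant $(R^2B^2/(\delta\sigma^7))^{L+1}$ is exponential in $L$ and the theorem simply tolerates it, so the contractivity you flag as essential is not needed for the statement as proved. Net: your route would buy a finite-$d$, convex-optimization version; the paper's buys an actual neural-network training guarantee at the price of asymptotics in $d$ and an $L$-exponential constant.
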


\noindent Our proof is based on a novel synthesis of \emph{state evolution}, the fundamental distributional recursion driving analyses of AMP, together with neural tangent kernel (NTK) analysis of training dynamics for overparametrized networks. Crucially, unlike in typical applications of NTK analysis, the level of overparametrization needed in our network is \emph{dimension-independent} even when the second moment $\E_{\x\sim q}\norm{\x}^2$ scales with the dimension $d$. The central reason behind this is that state evolution allows us to map the training dynamics of the network, which \emph{a priori} lives in $L_2(\R^d)$, to training dynamics over the space of functions $L_2(\R)$, where the resulting learning problem amounts to that of \emph{one-dimensional score estimation}. As a result, our learning guarantee only requires overparametrization scaling inverse polynomially in the target error.

\bigskip
\noindent
\textbf{Experiments.} We complement these theoretical results with extensive numerical experiments. Our main empirical contributions are as follows:
\begin{itemize}[leftmargin=*,itemsep=0pt]
    \item We demonstrate that our theoretically motivated unrolled network learns the same optimal denoisers as Bayes AMP, providing a practical alternative that achieves the same performance but does not require explicit knowledge of the true signal prior.

    \item We observe that introducing auxiliary trainable parameters along with learnable denoisers further improves performance over AMP in low-dimensional settings (where the asymptotic optimality of Bayes AMP does not apply) and when the sensing matrix is non-Gaussian, both in \textit{well-conditioned} and \textit{ill-conditioned} settings. 

    \item We introduce rank-one matrix estimation as a new ``model organism'' for probing the properties of unrolled networks. To our knowledge, despite its prominence in the theoretical literature on AMP, rank-one matrix estimation has not yet been studied in the context of algorithm unrolling.
\end{itemize}

\noindent
The general approach of unrolling with learned denoisers is lesser utilized in the algorithm unrolling literature, which instead largely emphasizes learning auxiliary parameters around domain-specific entities -- e.g. measurement matrices or sparse coding dictionaries -- while fixing denoisers typically to a soft thresholding function. Our results indicate that learned denoisers can in fact capture distributional priors and are composable with these domain-specific learned parameters, providing a valuable addition to the algorithmic toolkit for practitioners of both AMP and unrolling.

\subsection{Related work}
\label{sec:related}
We provide an extensive review of prior work in the appendix. Here we discuss the works most directly related to ours.

\paragraph{Theory for unrolling ISTA.} The existing theory for algorithm unrolling almost exclusively focuses on unrolled ISTA (often called LISTA) and compressed sensing. Unlike the present work, they do not consider a Bayesian setting: the signal $x$ is a deterministic sparse vector, and the goal is to converge to $x$. Instead of proving learning guarantees, most of them are representational in nature, arguing that under certain settings of the weights in LISTA, the estimator computed by the network can be more iteration-efficient than vanilla ISTA~\cite{moreau2016understanding,chen2018theoretical,xin2016maximal,liu2019alista,chen2021hyperparameter}. The works of~\cite{shultzman2023generalization,schnoor2023generalization,behboodi2022compressive} proved generalization bounds for unrolled networks; these are statistical rather than computational in nature. Finally, recent work of~\cite{shah2023optimization} studied optimization aspects of LISTA, and their main theorem, motivated by the NTK literature~\cite{liu2020linearity,liu2022loss} from a different perspective than ours, was an upper bound on the Hessian of the empirical risk of an unrolled ISTA network in a neighborhood around random initialization. 

\paragraph{Unrolled AMP.} Borgerding et al.~\cite{borgerding2016onsager,borgerding2017amp} were the first to propose unrolling AMP for compressed sensing. In contrast to the architecture we consider, they primarily considered fixed soft-thresholding denoisers $\eta_{\sf st}(\cdot;\lambda)$, in addition to simple parametric families of denoisers like 5-wise linear functions and splines. For these parametric denoisers, on simple priors like Bernoulli-Gaussian they found that the learned network could approach the performance of the ``oracle'' estimator that knows the support of the underlying signal \---- see the Appendix for further discussion.

\paragraph{Other learning-based approaches.} Here we further motivate the setting we consider by contrasting with other possible approaches to learning optimal inference algorithms from data. Perhaps the most obvious would be to simply try to directly learn an approximation to the density function for the prior, e.g. via kernel density estimation or some other non-parametric method. In the product-prior setting in which we prove our results, this is indeed a viable approach in theory. But in practice, unlike algorithm unrolling, this will not scale gracefully to general high-dimensional distributions~\cite{gao2019learning}.

A more scalable approach might be to try training a diffusion model on the data distribution~\cite{hyvarinen2005estimation}. The resulting learned score network could then be used to approximately implement Bayes AMP. Our method is not an alternative to this so much as it is a special case: whereas little is known about provable score estimation in general~\cite{shah2023learning,gatmiry2024learning,chen2024learning}, our theoretical results demonstrate that layerwise training of unrolled networks is a viable, provably correct way to implicitly estimate the score of the data distribution. Furthermore, unrolling allows one to introduce additional trainable parameters to improve robustness to real-world deviations from the stylized models studied in theory.

Finally, we mention the recent theoretical work of~\cite{ivkov2024semidefinite}, which shows that semidefinite programs can simulate AMP. While this is not a learning result, it has a similar motivation of reproducing the performance guarantees of AMP using a more robust suite of algorithmic tools.

\paragraph{Other theory for unrolling.} 

\cite{mei2023deep} established sample complexity bounds for learning graphical models via diffusion models by unrolling the variational inference algorithms used for score estimation into a ResNet and bounding the number of parameters needed for the network to express these algorithms. Similarly, \cite{mei2024u} showed that the popular U-Net architecture can simulate message-passing algorithms. These works can be interpreted as giving representational guarantees for algorithm unrolling, whereas in contrast, the focus of our work is on proving learning guarantees.

\section{Preliminaries on Bayes AMP and unrolling}
\label{sec:bayesamp}

Here we give an overview of the Bayes AMP algorithm in both the compressed sensing and rank-one matrix estimation settings. For convenience, when it is clear from context, we use Bayes AMP to refer to the general algorithm used in either setting.

\subsection{Compressed sensing}
\label{sec:amp_overview}

In compressed sensing, we are given noisy linear measurements $\y \in \R^m$ obtained from an unknown signal $\x \in \R^d$ via the observation process
\begin{equation}
    \y = \A\x + \bdeta \label{eq:cs},
\end{equation}
where $\A \in \R^{m \times d}$ and  $\bdeta \in \R^m$ is a random noise vector with i.i.d. entries drawn from the distribution $\mathsf{N}(0, \sigma^2)$. Throughout, we assume that $\x\sim p_{\sf x}$ for product prior $p_{\sf x} \triangleq p^{\otimes d}$, where $p$ is some distribution over $\R$. The \textit{compressed sensing} problem aims to recover the unknown signal $\x$ with estimate $\wh{\x}$ such that the mean squared error (MSE) $\frac{1}{d}\E\norm{\wh{\x} - \x}^2$ is minimal. Throughout this work we focus on the \emph{proportional asymptotic} setting where we implicitly work with a sequence of such compressed sensing problems indexed by dimension $d$, where $d$ and $m = m(d)$ jointly tend to infinity and $m(d)/d\to\delta$ for absolute constant $\delta > 0$.

\cite{donoho2009message} proposed the following approximate message passing (AMP) algorithm for estimating $\x$ given $\A, \y$. The algorithm starts with $x_0 = 0$ and $\bv_0 = y$ and proceeds by
\begin{align}
    \x_{\ell+1} &= f_\ell(\A^\top \bv_\ell + \x_\ell) \label{eq:xupdate} \\
    \bv_{\ell} &= \y - \A \x_{\ell} + \frac{1}{\delta} \bv_{\ell-1} \iprod{ f'_{\ell - 1}( \A^\top \bv_{\ell-1} + \x_{\ell-1} ) }. \label{eq:vupdate}
\end{align}
where $f_{\ell}: \R \to \R$ is a scalar denoiser applied entrywise, $f'_{\ell-1}$ is also applied entrywise, and $\iprod{\cdot}$ denotes an entrywise average. The last term in Eq.~\eqref{eq:vupdate} is commonly referred to as the \textit{Onsager term}. 

Importantly, the AMP iterates asymptotically satisfy a distributional recursion called \emph{state evolution}~\cite{bayati2011dynamics}. Suppose the entries of $\A$ are given by $\A_{ij} \sim \mathsf{N}(0, 1/m)$. Define the \emph{state evolution parameters} $(\tau_\ell)$ via the scalar recursion
 $$\tau_{\ell + 1}^2 = \sigma^2 + \frac{1}{\delta} \E[ ( f_\ell(\X + \tau_{\ell} \Z) - \X )^2 ] \;\;\; \text{with} \;\;\; \tau_0 = \sigma^2 + \frac{1}{\delta} \E[ \X^2 ]\,,$$ 
 where $\X\sim p$ and $\Z \sim \mathsf{N}(0,1)$.
Then it is known that as $d\to\infty$, the empirical distribution over entries of $\A^\top \bv_\ell + \x_\ell$ converges in a certain sense to the one-dimensional distribution over $\X + \tau_\ell \Z$~\cite{bayati2011dynamics}.
 
While these updates can be run with any choice of differentiable $f_{\ell}$, there is an asymptotically optimal choice that depends on the underlying prior $p_{\sf x}$, and the resulting optimal algorithm is called \emph{Bayes AMP}.  In particular, let $\tau_0^{*2} = \sigma^2 + \frac{1}{\delta}\mathbb{E}[\X^2]$, and define
\begin{equation}
    f^*_\ell = \mathbb{E}[\X | \X + \tau^*_\ell \Z = \cdot] \qquad \text{and} \qquad  \tau_{\ell+1}^{*2} = \sigma^2 + \frac{1}{\delta}\mathbb{E}[\left(f^*_\ell(\X + \tau_\ell^* \Z) - \X\right)^2],\label{eq:denoiseopt}
\end{equation}
where $\X\sim p$ and $\Z \sim \mathsf{N}(0, 1)$. Then setting $f_{\ell} = f^*_{\ell}$ for all $\ell$ in Eqs.~\eqref{eq:xupdate} and \eqref{eq:vupdate} yields Bayes AMP. 

In the asymptotic limit, i.e. as $m, d \to \infty$, Bayes AMP has strong theoretical properties. In the setting above, it is conjectured to obtain the optimal MSE over all polynomial-time algorithms~\cite{barbier2019optimal} and has been proven to be optimal over a quite general class of algorithms known as \textit{general first-order methods} (\textit{GFOM}s)~\cite{celentano2020estimation,montanari2022statistically}. 

In practice, however, Bayes AMP is subtly nontrivial to implement. For starters, one must know $p$ to construct $f^*_\ell$. Furthermore, using the exact recursion in Eq.~\eqref{eq:denoiseopt} can often lead the algorithm to diverge in finite dimensions. One instead estimates the state evolution parameters from the previous iterates, i.e. replacing $\tau_{\ell}^2$ with $\frac{1}{m}\norm{\bv_\ell}_2^2$, which is typically enough to stabilize Bayes AMP. The fact that this is a valid estimate follows by state evolution, which ensures that in the infinite dimensional limit, the entries of $\bv_{\ell}$ are distributed according to $\mathsf{N}(0, \tau_{\ell}^2)$ \cite{bayati2011dynamics}.

\subsection{Rank-one matrix estimation}\label{sec:r1pca}
While compressed sensing involves a \textit{linear} noisy transformation of its underlying signal, rank-one matrix estimation offers a \textit{nonlinear} counterpart. Here, the unknown signal $\x \in \R^d$ is first transformed into a rank-one matrix $\x\x^\top$. The observed signal is given by 

\begin{equation}
    \Y = \frac{\lambda}{d}\x\x^T + \G,\label{eq:r1pca}
\end{equation}
where $\G \in \R^{d \times d}$ is a symmetric matrix with entries $G_{ij} \sim \mathsf{N}(0, \frac{1}{d})$ for $i \leq j$ and $\lambda > 0$ denotes the \textit{signal-to-noise} (SNR) ratio. Again, we assume $x \sim p_{\sf x}$ for product prior $p_{\sf x} \triangleq p^{\otimes d}$ with $p$ some distribution over $\R$. Then, the \textit{rank-one matrix estimation} problem attempts to recover an estimate $\wh{\x}$ for $\x$ such that the \textit{Frobenius mean squared error} (MSE) $\frac{1}{d}\E\norm{\wh{\x}\wh{\x}^\top - \x\x^\top}^2_F$ is minimal. The asymptotic setting corresponds to working with a sequence of such problems indexed by dimension $d$, where $d \to \infty$.

The approximate message passing (AMP) algorithm for estimating $\x$ given $\Y$ proposed in \cite{rangan2012iterative} takes the form
\begin{align}
    \x_{\ell+1} &= f_\ell( \bv_\ell ) \label{eq:r1xup} \\
    \bv_{\ell} &=  \Y \x_{\ell} -  \x_{\ell-1} \iprod{ f'_{\ell - 1}(\bv_{\ell-1} ) }, \label{eq:r1vup}
\end{align}
where $f_{\ell} : \R \to \R$ again indicates a scalar denoiser applied componentwise. There are various possible choices for initialization; here we consider $\x_0 = \wh{1} \in \R^d$ and $\bv_0 = \Y \x_0$.

As with compressed sensing, AMP iterates for rank-one matrix estimation satisfy a state evolution recursion. Define parameters $\mu_{\ell}$ and $\tau_{\ell}$ evolving according to the scalar equations
\begin{align}
    \mu_{\ell+1} = \lambda \mathbb{E}[\X f_{\ell}(\mu_{\ell}\X  + \sqrt{\tau_{\ell}}\Z)] \label{eq:semu}  \\
    \tau_{\ell+1} = \mathbb{E}[f_{\ell}(\mu_{\ell}\X  + \sqrt{\tau_{\ell}}\Z)^2] \label{eq:setu}, 
\end{align}
where $\X \sim p$ and $\Z \sim \mathsf{N}(0, 1)$. Then for $d \to \infty$, the empirical distribution over entries of $\bv_{\ell}$ converges asymptotically to the one-dimensional distribution over $\mu_{\ell}\X + \sqrt{\tau_{\ell}}\Z$ \cite{rangan2012iterative}. Bayes AMP thus corresponds to the choice of denoising functions optimizing the posterior mean on $\X$ given $\mu_{\ell} \X + \sqrt{\tau_{\ell}} \Z$:
\begin{equation}
    f^*_\ell = \mathbb{E}[ \X \; | \; \mu^*_{\ell} \X + \sqrt{\tau^*_\ell} \Z = \cdot],
\end{equation}
where $\mu^*_{\ell}$ and $\tau^*_{\ell}$ are obtained by substituting $f^*_{\ell}$ as the denoiser in Eqs.~\eqref{eq:semu} and~\eqref{eq:setu}. 

As with compressed sensing, AMP has powerful theoretical guarantees for rank-one matrix estimation. In some cases, e.g. when $p$ is the uniform distribution over $\{1,-1\}$, the Bayes AMP algorithm is information-theoretically optimal, i.e., it asymptotically matches the MSE achieved by the Bayes optimal estimator \cite{deshpande2014information}. In general, Bayes AMP is conjectured to achieve asymptotically optimal MSE over all polynomial-time algorithms for this task~\cite{montanari2021pca} while being provably optimal over all GFOMs \cite{montanari2022statistically}.

Despite these guarantees, Bayes AMP for rank-one matrix estimation suffers from the same implementation bottlenecks regarding knowledge of the true prior of the underlying signal. And as with compressed sensing, choosing the state evolution parameters $\mu_{\ell}$ and $\tau_{\ell}$ according to the true recursion can cause Bayes AMP to diverge, so in practice one estimates these parameters using the previous iterates. In particular, practitioners typically replace $\tau_{\ell}$ with $\frac{1}{d}\norm{\x_{\ell}}_2^2$ using Eq.~\eqref{eq:setu} and $\mu_{\ell}$ with $\sqrt{|\frac{1}{d}\norm{\bv_{\ell}}_2^2 - \frac{1}{d}\norm{\x_{\ell}}_2^2|}$ using the infinite dimensional distribution of the components of $\bv_{\ell}$. The latter holds when the prior $p$ has zero mean and unit variance; in general, the estimate must be scaled down by an additional factor of $\sqrt{\mathbb{E}_{\X \sim p}[\X^2]}$, which can be estimated from data. 
Unless expressed otherwise, we assume $\mathbb{E}_{\x \sim p}[\x] = 0$ and $\mathbb{E}_{\x \sim p}[\x^2] = 1$ in this setting throughout. These conditions are without loss of generality for compressed sensing, to which our theoretical results pertain, but not without loss of generality for rank-one matrix estimation.

\subsection{Unrolling Bayes AMP}
\label{sec:arch}

The aforementioned challenges in realizing the conjectured optimality of Bayes AMP in practice motivate the need for a robust method that does not require knowledge of the prior distribution $p_{\sf x}$.

Consider replacing each scalar denoiser $f_{\ell}$ in Eq.~\eqref{eq:xupdate} or~\eqref{eq:r1xup} with a multilayer perceptron (MLP) that \textit{learns} the ``right'' denoiser function to use at each iteration of AMP. As we will see, a prudent training approach is enough to provably ensure that our unrolled network learns the optimal denoiser at each layer, effectively recovering Bayes AMP even without explicit knowledge of the prior.

\subsubsection{Architecture}

\paragraph{Compressed sensing.} Suppose we are given training data $\{(\y^i, \x^i)\}_{i=1}^N$ generated according to Eq.~\eqref{eq:cs} with $\x^i \sim p_{\sf x}$ for all $i$. Let $L$ denote the number of layers in our unrolled network, and let $\mathcal{F}$ denote a family of MLPs with fixed architecture (i.e. fixed depth and width) constrained to a two-dimensional input and one-dimensional output. For each $\ell \in [0, L-1]$, initialize an MLP $\wh{f}_{\ell}: \R^2\to\R$ chosen from $\mathcal{F}$. Set $\wh{\x}_0 = {0_d}
 \in \mathbb{R}^d$ and $\wh{\bv}_0 = \y^i \in \R^m$, for a given training input $\y^i$. Then for each layer $ \ell \in [0, L-1]$, our network computes the forward pass
\begin{align}
    \wh{\x}_{\ell+1} &= \wh{f}_{\ell}(\A^\top \wh{\bv}_\ell + \wh{\x}_\ell; \wh{\tau}_\ell) \label{eq:xfwd} \\
    \wh{\bv}_{\ell+1} &= \y - \A\wh{\x}_{\ell+1} + \frac{1}{\delta} \wh{\bv}_{\ell} \iprod{ \partial_1 \wh{f}_{\ell}( \A^\top \wh{\bv}_{\ell} + \wh{\x}_{\ell} ; \wh{\tau}_\ell) }\,,\label{eq:vfwd}
\end{align}
where $\wh{\tau}_\ell = \frac{1}{\sqrt{m}}\norm{\wh{\bv}_{\ell}}_2$ and $\partial_1$ denotes differentiation with respect to the first input parameter.
The notation $\wh{f}_{\ell}(\hspace{0.1cm} \cdot \hspace{0.1cm}; \wh{\tau}_\ell)$ denotes applying the scalar function $\wh{f}_{\ell}(\hspace{0.1cm} \cdot \hspace{0.1cm},  \wh{\tau}_\ell)$ entrywise. We emphasize that $\wh{f}_{\ell}$ is tied to $\partial_1 \wh{f}_{\ell}$; that is, we are taking the derivative of the MLP to compute the Onsager term. 

\paragraph{Rank-one matrix estimation.} Here, we are given training data $\{(\Y^i,\x^i)\}_{i=1}^N$ generated by Eq.~\eqref{eq:r1pca} with all $\x^i \sim p_{\sf x}$. Let $\mathcal{F}$ denote a family of MLPs with fixed architecture constrained to three-dimensional inputs and a one-dimensional output. Set $\x_0 = \wh{1} \in \R^d$ and $\bv_0 = \Y \x_0$. Initializing MLP $\wh{f}_{\ell} \in \mathcal{F}$ for all $\ell \in [0, L-1]$, we obtain an $L$-layer unrolled network that computes
\begin{align}
    \wh{\x}_{\ell+1} &= \wh{f}_{\ell}( \wh{\bv}_\ell; \wh{\mu}_\ell, \wh{\tau}_\ell) \label{eq:xfwdpca} \\
    \wh{\bv}_{\ell+1} &= \Y\wh{\x}_{\ell+1} - \wh{\x}_{\ell} \iprod{ \partial_1 \wh{f}_{\ell}( \wh{\bv}_{\ell} ; \wh{\mu}_\ell, \wh{\tau}_\ell) },\label{eq:vfwdpca}
\end{align}
at every iteration, where $\wh{\mu}_\ell = \sqrt{\bigl|\frac{1}{d}\norm{\bv_{\ell}}_2^2 - \frac{1}{d}\norm{\x_{\ell}}_2^2\bigr|}$ and $\wh{\tau}_t = \frac{1}{{d}}\norm{\wh{\x}_{\ell}}_2^2$. Here $\wh{f}_{\ell}(\hspace{0.1cm} \cdot \hspace{0.1cm}; \wh{\mu}_\ell, \wh{\tau}_\ell)$ denotes applying the scalar function $\wh{f}_{\ell}(\hspace{0.1cm} \cdot \hspace{0.1cm};  \wh{\mu}_\ell, \wh{\tau}_\ell)$ entrywise. 

\bigskip
\noindent
We refer to our unrolled architecture in either setting as an \textbf{LDNet} (\textbf{L}earned \textbf{D}enoising \textbf{Net}work).
\subsubsection{Training}    
Na\"ively, one might consider training the $L$-layer network end-to-end on the mean squared errors of the network estimates -- i.e., either with loss function $\mathcal{L}_{CS} = \frac{1}{N}\sum_{i=1}^N \frac{1}{d}\norm{\wh{\x}^i_L - \x^i}_2^2$ for compressed sensing or $\mathcal{L}_{ME} = \frac{1}{N}\sum_{i=1}^N \frac{1}{d}\norm{\wh{\x}^i_L\wh{\x}^{i \top}_L - \x^i\x^{i \top}}_F^2$ for rank-one matrix estimation. However, as we observed empirically (echoed by findings in~\cite{metzler2017learned}), such an approach gets trapped in suboptimal local assignments of denoising functions. 

Instead, we employ \textit{layerwise training}, where we iteratively train the $\ell$-th denoiser $\wh{f}_{\ell}$ on the mean squared error loss for the layer-$\ell$ estimate. If $\Psi$ denotes an LDNet with $L$ layers, let $\Psi[0:\ell]$ denote the subnetwork that consists only of the first $\ell+1$ layers of $\Psi$, with denoiser $\wh{f}_{\ell}$ at layer $\ell$ for $0 \leq \ell \leq L-1$. Then our training procedure follows Algorithm $\ref{alg:lt}$. Note we initialize the $\ell$-th denoiser weights with the previous learned denoiser before training -- while this is not relevant to our theoretical results in Section \ref{sec:theory}, empirically, we find that this initialization is necessary to avoid being trapped in suboptimal regions of parameters. Likewise, we include an optional finetuning step that further reduces approximation error in the learned denoisers but is not needed for our theory results.

\begin{algorithm}
\caption{Layerwise Training}\label{alg:lt}
\KwIn{Training data $\mathcal{D}$, LDNet $\Psi$}
\For{$\ell = 0$ to $\ell = L-1$}{
    \If{$\ell > 0$}{
        Initialize $\wh{f}_{\ell} \gets \wh{f}_{\ell-1}$\;}
    Freeze learnable weights in $\wh{f}_k$ for $k < \ell$\;
    Train $\Psi[0:\ell]$ on $\mathcal{D}$\;
    \tcp{Optional finetuning step}
    Unfreeze learnable weights in $\wh{f}_k$ for $k < \ell$ and train $\Psi[0:\ell]$ on $\mathcal{D}$\;
    }
\KwOut{Fully trained $\Psi$}
\end{algorithm}

The proof of optimality of Bayes AMP among all implementations of AMP for the problems we consider, as given in~\cite{celentano2020estimation,montanari2022statistically}, strongly motivates our training method: assuming we have learned optimal denoisers up to layer $\ell-1$, one can show that the minimum mean squared error at layer $\ell$ is achieved by the denoiser used in Bayes AMP. This gives a heuristic sense for how layerwise training facilitates learning optimal denoisers, and this intuition is validated in both our theory and experiments.

\section{Provably learning Bayes AMP}\label{sec:theory}

We now provide theoretical guarantees that our unrolled denoising network can learn Bayes-optimal denoisers when trained in a layerwise fashion. 

Consider any prior $p_{\sf x} = p^{\otimes d}$ for which $p$ satisfies the following assumption. The product prior setting is quite standard and widely studied within the theory literature on AMP (e.g. \cite{bayati2011dynamics, DeshpandeAbbeMontanari2016ISIT}).
\begin{assumption}
\label{asm:smoothed-p}
    Given $\tau \ge 0$, let $p(\cdot ; \tau)$ denote the density of the convolution $p\star\mathsf{N}(0,\tau^2)$. We assume that:
    \begin{enumerate}[leftmargin=*,topsep=0pt,itemsep=0pt]
        \item $p$ is $R$-sub-Gaussian with $\E_{\X\sim p}[\X] = 0$.
        \item The \emph{score function} $\partial_1 p(\cdot; \tau)$ is $B$-Lipschitz for all $\tau \geq \sigma^2$, where $\sigma^2$ is the variance of the entries of $\bdeta$ in Eq.~\eqref{eq:cs}.
    \end{enumerate}
\end{assumption}

\noindent Both assumptions are relatively mild and hold for a large class of distributions. For example, the sub-Gaussianity holds for any distribution with bounded support (see Section 2.5 in \cite{Vershynin_2018}) and the Lipschitzness of the score function is a consequence of regularizing properties of heat flow (see e.g. Lemma 4 in~\cite{mikulincer2023lipschitz}.

Our main guarantee (see Theorem~\ref{thm:formal-thm} below) is that under Assumption~\ref{asm:smoothed-p}, a suitable unrolled architecture trained with SGD on examples of compressed sensing tasks can compete with Bayes AMP. In Section~\ref{sec:proof_prelims}, we define the training objective and architecture, describe how our bounds will depend on the underlying prior $p$, and formally state our main result. The proof is provided in the remaining sections.

\subsection{Proof preliminaries and theorem statement}
\label{sec:proof_prelims}

\paragraph{State evolution and training objective.} Recall that the $\ell$-th iteration of AMP with denoiser $f_\ell$ is given by
\begin{align*}
    x_{\ell+1} &= f_\ell(A^\top v_\ell + x_\ell) \\
    v_{\ell} &= y - Ax_{\ell} + \frac{1}{s} v_{\ell-1} \iprod{ \nabla f_{\ell - 1}( A^\top v_{\ell-1} + x_{\ell-1} ) }\,,
\end{align*}
where $x_0$ is initialized with the all-zeroes vector and $v_0$ is initialized with the measurement $y$. The following result shows that we can characterize the behavior of $x_\ell$ in the limit of $d\to\infty$.

\begin{lemma}
[Asymptotic characterization of AMP iterates \cite{bayati2011dynamics}]
\label{lemma:asymp-amp}
Let $\A \in \R^{m \times d}$ be a sensing matrix with i.i.d. entries $\A_{ij} \sim \mathsf{N}(0, 1/m)$. Assume $m/d \to \delta \in (0, \infty)$. Consider a sequence of vectors $\{ x(d), \eta(d) \}$ indexed by dimension whose empirical distribution converges weakly to probability measures $p_{x}$ and $p_{\eta}$ on $\R$ with bounded moments. Then, for any pseudo-Lipschitz function $\psi : \R^t \to \R$ and all $\ell \geq 0$, almost surely 
\begin{align*}
    & \lim_{ d \to \infty } \; \frac{1}{d} \sum_{i=1}^d \psi( x_\ell^{(i)}, x^{(i)} ) = \E[ \psi( f_\ell ( \X + \tau_\ell \Z ), \X ) ] \\
    & \lim_{d \to \infty} \; \frac{1}{d} \sum_{i=1}^d \psi( x^{(i)} - ( A_i^\top v_\ell + x_\ell^{(i)} ), x^{(i)} ) = \E[ \psi( \tau_\ell \Z, \X ) ]\,.
\end{align*}
where $\X \sim p_x$ and $\Z \sim \normal(0, 1)$. The state evolution parameters $\tau_0, \tau_1, \ldots$ are recursively defined as follows
\begin{equation}
    \begin{aligned}
        \tau_0^2 &= \sigma^2 + \frac{1}{\delta} \E_{\X \sim p_x}[ \X^2 ] \\
        \tau_{\ell + 1}^2 &= \sigma^2 + \frac{1}{\delta} \E_{\X \sim p_x, \Z \sim \normal(0, 1)}[ ( f(\X + \tau_\ell \Z) - \X )^2 ]\,.
    \end{aligned}
\end{equation}
\end{lemma}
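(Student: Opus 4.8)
The plan is to prove the stated characterization via the Gaussian conditioning technique that underlies the state evolution analysis of Bayati and Montanari~\cite{bayati2011dynamics}; indeed, the statement is essentially a restatement of their main theorem under our hypotheses, and I outline their argument below. First I would recast the AMP recursion in terms of the ``effective noise'' vectors $h_{\ell} \triangleq \A^\top v_{\ell} + x_{\ell} - x$, so that $x_{\ell+1} = f_{\ell}(x + h_{\ell})$ applied entrywise, together with a companion sequence $b_{\ell}$ tracking $\A x_{\ell} - \eta$ corrected by the Onsager term. The updates then become a coupled recursion: at each step one forms a pseudo-Lipschitz, entrywise function $q_{\ell}$ (resp.\ $m_{\ell}$) of the vectors produced so far and reveals $\A q_{\ell}$ (resp.\ $\A^\top m_{\ell}$), then subtracts a memory correction. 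The target claim reduces to showing that the empirical distribution of the entries of $h_{\ell}$ jointly with those of $x$ converges to the law of $(\tau_{\ell}\Z, \X)$, where $\Z \sim \normal(0,1)$ is independent of $\X \sim p_x$ and the $\tau_{\ell}$ obey the stated recursion.

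The engine is the Gaussian conditioning lemma: for $\A$ with i.i.d.\ $\normal(0,1/m)$ entries, conditioning on the sigma-algebra generated by all iterates up to step $\ell$ fixes a finite collection of linear images $\A q_0, \dots, \A q_{\ell}$ and $\A^\top m_0, \dots, \A^\top m_{\ell-1}$; the conditional law of $\A$ is then a deterministic matrix matching those constraints plus an independent fresh copy of $\A$ acting on the orthogonal complements of $\mathrm{span}(q_j)$ and $\mathrm{span}(m_j)$. Applying this at step $\ell$, the newly revealed vector $\A^\top m_{\ell}$ decomposes into (i) a linear combination of $q_0,\dots,q_{\ell}$ whose coefficients, by the inductive hypothesis on normalized inner products, agree asymptotically with the coefficients implicit in the Onsager term, and (ii) a fresh Gaussian vector orthogonal to the history. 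Part (i) is cancelled, up to $o_d(1)$ error, by the Onsager correction in Eq.~\eqref{eq:vupdate}, so $h_{\ell+1}$ is asymptotically a centered Gaussian vector whose per-coordinate variance equals $\tau_{\ell+1}^2 = \sigma^2 + \frac{1}{\delta}\E[(f_{\ell}(\X + \tau_{\ell}\Z) - \X)^2]$. This is run as an induction on $\ell$, carrying along: concentration of all normalized pairwise inner products $\frac1d\iprod{m_j, m_k}$, $\frac1d\iprod{q_j, q_k}$ and of the derivative averages $\iprod{f'_{\ell}(\cdot)}$ around their state-evolution values; nondegeneracy of the relevant Gram matrices so the conditional means are well-defined; and Gaussian concentration of Lipschitz functionals of the fresh component.

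Given this characterization, both limits in the statement follow. For pseudo-Lipschitz $\psi$ and the entrywise identity $x_{\ell+1} = f_{\ell}(x + h_{\ell})$, the average $\frac1d\sum_i \psi(x_{\ell+1}^{(i)}, x^{(i)})$ converges to $\E[\psi(f_{\ell}(\X + \tau_{\ell}\Z), \X)]$, using the asymptotic Gaussianity of $h_{\ell}$ together with the assumed weak convergence (with convergence of moments) of the empirical law of $x(d)$ to $p_x$ — the moment control being what tames the polynomial growth allowed in the pseudo-Lipschitz class. The second limit is the same statement applied to $x^{(i)} - (\A_i^\top v_{\ell} + x_{\ell}^{(i)}) = -h_{\ell}^{(i)}$, which is asymptotically $\tau_{\ell}\Z$ by symmetry.

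I expect the main obstacle to be controlling error propagation across the induction: each step introduces $o_d(1)$ perturbations — from the conditional-mean coefficients deviating from the exact Onsager coefficients, from finite-sample fluctuations of inner products, and from the fresh Gaussian component being only approximately orthogonal to the history — and one must ensure these do not blow up when pushed through the denoisers $f_0,\dots,f_{\ell-1}$, which are assumed only pseudo-Lipschitz rather than globally Lipschitz. Making this quantitative and uniform over the fixed number of layers, and verifying the requisite nondegeneracy and concentration estimates, is precisely the technical heart of~\cite{bayati2011dynamics}; since Lemma~\ref{lemma:asymp-amp} is a direct instance of their theorem under our hypotheses (limiting laws with bounded moments, pseudo-Lipschitz test functions), in the paper we simply invoke their result rather than reproducing the full argument.
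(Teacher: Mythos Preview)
Your proposal is correct and aligns with the paper's treatment: the paper does not prove Lemma~\ref{lemma:asymp-amp} but simply cites it as the main result of~\cite{bayati2011dynamics}, exactly as you conclude in your final paragraph. Your sketch of the Gaussian conditioning argument is accurate and goes beyond what the paper itself provides, which is merely the citation.
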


\noindent Observe that to minimize the variance $\tau_{\ell}^2$ at each iteration, the optimal choice of denoiser $f_\ell$ is $\optdenoiser{\ell}(x) = \E[ \X \; | \; \X + \tau_\ell \Z = x ]$. To prove our learning guarantee, we start by proving that the error in using the learned denoiser in one unrolled layer is small. Denote the sequence generated by the learned denoiser $\hf$ by $\hx_\ell$ and $\hv_\ell$. Concretely, the recursion for $\hx_{\ell+1}$ and $\hv_\ell$ is given by
\begin{align}
    \hx_{\ell+1} &= \hf_\ell(A^\top \hv_\ell + \hx_\ell)  \\
    \hv_\ell &= y - A \hx_{\ell} + \frac{1}{\delta} \hv_{\ell-1} \iprod{ \nabla \hf_{\ell-1} ( A^\top v_{\ell-1} + x_{\ell-1} ) }\,,
\end{align}
where the learned denoiser $\learneddenoiser{\ell}$ is applied entrywise. The Onsager term in AMP ensures that for any iteration $\ell$, the distribution of $A^\top \hv_\ell + \hx_{\ell}$ asymptotically behaves as if every coordinate is i.i.d. as $d \to \infty$ for any $\ell$. Therefore, it suffices to learn the denoiser for any fixed coordinate.\footnote{In our experiments, we learn the denoiser using all coordinates, but in our theoretical result, we focus on learning using only a single coordinate. The latter is less sample-efficient but more convenient for our proof. It should be possible to prove a guarantee for the later, but it is more cumbersome as we need to prove that the samples obtained by different coordinates are close to being i.i.d. for some notion of closeness, and then generalize the result of \cite{allen2019learning} to allow for such samples.} Without loss of generality we consider the first coordinate and try to learn the denoiser function by minimizing the following objective: 
\begin{equation}
\label{eq:denoising-objective-layer-t}
    \min_g \;\; \E \big[ (  g( A_1^\top \hv_\ell + \hx_\ell^{(1)} ) - x^{(1)} )^2 \big]\,,
\end{equation}
where $A^\top_j$ denotes the $j$th row of $A^\top$ and $x^{(j)}$ denotes the $j$th coordinate of the vector $x$. As the learning and generalization guarantee is identical for all $\ell$, we will occasionally drop $\ell$ from the subscript when the context is clear.

\paragraph{Learner function.} 
We parametrize the scalar denoiser in a given layer of our unrolled AMP architecture as a one-hidden-layer ReLU network in the following form: letting $w_j^{[t]}$ denote the weight of the $j$th neuron after $t$ steps gradient descent, we consider
\begin{equation}\label{eq:learner-function}
    \hf( x; \theta_t ) = \sum_{j=1}^m a_j  \text{ReLU} ( w_j^{[t]}x + b_j )\,,
\end{equation}
We initialize the entries of weights $w_j^{[0]}$ and biases $b_j^{[0]}$ to be i.i.d. from $\normal (0, 1/m)$ and entries of $a_j^{[0]}$ to be i.i.d. from $\normal (0, \ve_a)$ for some fixed $\ve_a \in (0, 1]$. We only train the weights $w_j$ of hidden layers to simplify the analysis and freeze the bias term $b_j$ and output layer weights $a_j$ at initialization. To update the weights at time $t$, we take one step of online gradient descent with respect to the loss in Eq.~\eqref{eq:denoising-objective-layer-t} with step size $\eta$ on a fresh sample $(x, y, A)$;\footnote{In our experiments, we keep the measurement matrix $A$ fixed and only sample fresh $(x, y)$ pairs, but in our theoretical result, we assume that gradient descent uses fresh samples $(x, y, A)$ to avoid technical difficulties regarding dependencies between the errors at different layers of unrolled architecture. We expect that with more work, one can extend the proof to fixed $A$.} here we use the learned denoisers from the previous layers to compute $\hx_\ell$ and $\hv_\ell$.

\paragraph{Denoiser complexity.} 

To quantify the complexity of learning Bayes AMP in terms of the underlying prior $p$, we will work with the following notion of the \emph{complexity} of a class of scalar functions from~\cite{allen2019learning}, which we will apply to the denoisers that arise in Bayes AMP:

\begin{definition}[Scalar function complexity~\cite{allen2019learning}]\label{def:denoiser-complexity}
    Let $C > 0$ be some sufficiently large absolute constant (e.g. $10^4$). Given any smooth function $\phi: \R\to\R$ and a parameter $\alpha > 0$, we define \emph{complexity of $\phi$ at scale $\alpha$} as follows. Suppose $\phi$ admits a power series expansion $\phi(z) = \sum_{i=0}^\infty c_i z^i$. Then
    \begin{equation*}
         \complexity ( \phi, \alpha ) \triangleq \sum_{i=0}^\infty \bigl( 1 + (\log(1/\epsilon)/i)^{i/2} \bigr)\cdot (C\alpha)^i |c_i| \quad \text{and} \quad \samplecomplexity ( \phi, \alpha ) \triangleq C \sum_{i=0}^\infty (i + 1)^{1.75} \alpha^i | c_i |.
    \end{equation*}
    Given a class of scalar functions $\mathcal F$, we define the \emph{complexity of $\mathcal F$ at scale $\alpha$} by $\complexity (\mathcal F, \alpha) = \sup_{\phi \in \mathcal F} \complexity ( \phi, \alpha )$ (and similarly $\samplecomplexity (\mathcal F, \alpha) = \sup_{\phi \in \mathcal F} \samplecomplexity ( \phi, \alpha )$).
\end{definition}

\noindent Intuitively, $\complexity (\mathcal F, \alpha)$ and $\samplecomplexity (\mathcal F, \alpha)$ both captures how much functions in function class $\mathcal F$ can be approximated using a low-degree polynomial. For any function $\phi$ and any $\alpha$, the above complexities are related by $\samplecomplexity(\phi, \alpha) \leq  \complexity(\phi, \alpha) \leq \samplecomplexity( \phi, O(\alpha) ) \times \poly(1/\ve)$ because $( C \sqrt{ \log (1/\ve) } / \sqrt{i} )^i \leq e^{O(\log 1/\ve)} = \poly(1/\ve)$ for all $i$. We provide more intuition on how $\complexity(\phi, \alpha)$ and $\samplecomplexity(\phi, \alpha)$ scales with $\epsilon, \alpha$, under mild assumptions on $\phi$ in Section~\ref{sec:complexity}.

\paragraph{Main result.} 

We can now formally state the main theoretical guarantee of this work, namely that layerwise training of LDNet results in performance matching that of Bayes AMP for compressed sensing:

\begin{theorem} 
\label{thm:formal-thm}
    Suppose the prior distribution $p$ satisfies Assumption~\ref{asm:smoothed-p}. Then, for every $\ve_2 \in (0, 1)$ and $\ve_1 \in (0, 1/\samplecomplexity(\mathcal F, R (\log 1/\ve_2)^{3/2} ))$, 
    there exists 
    \begin{equation*}
    M_0 = \poly( \mathfrak{C}_{\ve_1} (f^*, R (\log 1/\ve_2)^{3/2} ), 1/\ve_1 ) \;\; \text{ and } \;\; N_0 = \poly( L \samplecomplexity (f^*, R (\log 1/\ve_2)^{3/2} ), 1/\ve_1 )
    \end{equation*}
    such that the following holds. 
    
    Let $L$ be any positive integer. Consider an LDNet of depth $L$ with MLP denoisers $\wh{f}_\ell$ given by the MLP architecture in Eq.~\eqref{eq:learner-function} with $m \ge M_0$ neurons. Suppose the network is trained by running gradient descent from random initialization with step size $\eta = \Tilde{\Theta}(1/(\ve_1 m))$ on $n \ge N_0$ samples of the form $(y^i, x^i)$, where each training example is generated by independently sampling Gaussian matrix $A$ with entries i.i.d. from $\mathsf{N}(0,1/m)$, sampling $x^i\sim p_{\mathsf{x}} = p^{\otimes d}$, and forming $y^i = Ax^i + \bdeta$ for $\bdeta \sim \mathsf{N}(0,\sigma^2\cdot \Id)$.
    
    After $T = \Tilde{\Theta}( \samplecomplexity (f_\ell^*, R (\log 1/\ve_2)^{3/2} )^2 / \ve_1^2 )$ steps of gradient descent, with high probability the activations $(\hx_L, \hv_L)$ and denoiser $\hf_L$ at the output layer of the LDNet (see Eqs.~\eqref{eq:xfwd} and~\eqref{eq:vfwd}) satisfy
    \begin{align*}
        \E_{x, A} \big[ \frac{1}{d} \|  \hf_L ( A^\top \hv_L + \hx_L; \wh{\tau}_L ) - x \|^2 \big] \lesssim & \; \mathrm{MSE}_{\mathsf{AMP}}(L) + \Bigl( \frac{R^2 B^2}{ \delta \sigma^7 } \Bigr)^{L+1} (\ve_1 + \ve_2) + o_d(1)\,,
    \end{align*}
    where $\mathrm{MSE}_{\mathsf{AMP}}(L) \triangleq \E_{x, A} \big[ \frac{1}{d} \|  f_L^* ( A^\top v_L + x_L; \tau_L ) - x \|^2 \big]$ is the error achieved by running $L$ steps of Bayes AMP. 
\end{theorem}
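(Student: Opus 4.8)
The plan is to prove the theorem by induction on the layer index $\ell \in \{0,1,\dots,L\}$, maintaining the hypothesis that after layerwise training of the first $\ell$ denoisers: (i) each learned denoiser $\learneddenoiser{k}$, $k < \ell$, is $L_2$-close to the Bayes-optimal denoiser $\optdenoiser{k}$ in the relevant norm; and (ii) the state-evolution parameter $\bar\tau_\ell$ of the \emph{learned} recursion --- equivalently, the $d\to\infty$ limit of the estimate $\wh\tau_\ell = \tfrac1{\sqrt m}\|\wh v_\ell\|_2$ --- satisfies $|\bar\tau_\ell - \tau_\ell^*| \le e_\ell$ for a quantity $e_\ell$ tracked through the induction, with base case $e_0 = 0$ (at layer $0$, $\wh x_0 = 0$ and $\wh\tau_0 = \tfrac1{\sqrt m}\|y\|_2$, which in the limit is exactly the Bayes AMP initialization $\tau_0 = \tau_0^*$). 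The core mechanism, anticipated in the introduction, is that state evolution collapses the nominally $d$-dimensional layerwise learning problem onto a \emph{fixed one-dimensional regression problem}, to which overparametrized-network training guarantees apply with dimension-independent overparametrization.

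For the inductive step, fix $\ell$ and assume the hypothesis. Since the NTK parametrization keeps the trained weights $w_j$ within a bounded neighborhood of initialization, each $\learneddenoiser{k}$ ($k<\ell$) is Lipschitz with a controlled constant, so Lemma~\ref{lemma:asymp-amp} applies to the learned recursion: the empirical distribution of the coordinates of $A^\top \wh v_\ell + \wh x_\ell$ converges, for pseudo-Lipschitz test functions, to the law of $\X + \bar\tau_\ell \Z$ with $\X\sim p$, $\Z\sim\normal(0,1)$. Hence the layer-$\ell$ objective~\eqref{eq:denoising-objective-layer-t} equals, up to $o_d(1)$, the scalar denoising risk $\E[(g(\X+\bar\tau_\ell\Z)-\X)^2]$, whose minimizer is the posterior-mean denoiser $g^*_{\bar\tau_\ell}(x) := \E[\X\mid \X+\bar\tau_\ell\Z = x] = x + \bar\tau_\ell^2\,\partial_1\log p(x;\bar\tau_\ell)$ by Tweedie's formula --- this is the one-dimensional score-estimation target. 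Since $p(\cdot;\tau)$ is a strictly positive, real-analytic Gaussian mixture, $g^*_{\bar\tau_\ell}$ is real-analytic with power-series coefficients (hence complexities $\complexity(\cdot,\alpha)$, $\samplecomplexity(\cdot,\alpha)$ of Definition~\ref{def:denoiser-complexity}) controlled via the sub-Gaussianity and score regularity of Assumption~\ref{asm:smoothed-p}, at the scale $\alpha = R(\log1/\ve_2)^{3/2}$ set by the sub-Gaussian radius of $\X+\bar\tau_\ell\Z$. Invoking the overparametrized two-layer online-SGD guarantee of~\cite{allen2019learning} for the network~\eqref{eq:learner-function}: with $m\ge M_0$ neurons, $n\ge N_0$ fresh samples, and $T$ steps as in the statement, the trained denoiser achieves population risk within $\ve_1$ of the smallest risk attainable within the complexity budget of Definition~\ref{def:denoiser-complexity}, which is within $O(\ve_2)$ of $\mathrm{mmse}(\bar\tau_\ell) := \E[(g^*_{\bar\tau_\ell}(\X+\bar\tau_\ell\Z)-\X)^2]$ --- the extra $\ve_2$ absorbing the sub-Gaussian-tail contribution from inputs beyond radius $\sim R(\log1/\ve_2)^{3/2}$. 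Since $g^*_{\bar\tau_\ell}$ is the $L_2$-projection minimizer, the Pythagorean (MMSE) identity upgrades this excess-risk bound to $\E[(\learneddenoiser{\ell}(\X+\bar\tau_\ell\Z) - g^*_{\bar\tau_\ell}(\X+\bar\tau_\ell\Z))^2] \lesssim \ve_1+\ve_2+o_d(1)$: the learned denoiser is $L_2$-close to the posterior-mean denoiser \emph{at its own noise level} $\bar\tau_\ell$.

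Two stability estimates then close the induction. First, \emph{state-evolution stability}: from $\bar\tau_{\ell+1}^2 = \sigma^2 + \tfrac1\delta\E[(\learneddenoiser{\ell}(\X+\bar\tau_\ell\Z)-\X)^2]$, the sandwich $\mathrm{mmse}(\bar\tau_\ell) \le \E[(\learneddenoiser{\ell}(\X+\bar\tau_\ell\Z)-\X)^2] \le \mathrm{mmse}(\bar\tau_\ell) + O(\ve_1+\ve_2)$, and the Bayes recursion~\eqref{eq:denoiseopt} $\tau_{\ell+1}^{*2} = \sigma^2 + \tfrac1\delta\mathrm{mmse}(\tau_\ell^*)$, we obtain $|\bar\tau_{\ell+1}^2 - \tau_{\ell+1}^{*2}| \le \tfrac1\delta\bigl(|\mathrm{mmse}(\bar\tau_\ell)-\mathrm{mmse}(\tau_\ell^*)| + O(\ve_1+\ve_2)\bigr)$; bounding the Lipschitz constant of the scalar MMSE curve $\tau\mapsto\mathrm{mmse}(\tau)$ on $[\sigma,\infty)$ via heat-flow / De Bruijn-type identities under Assumption~\ref{asm:smoothed-p} by a quantity of order $R^2B^2/\sigma^7$ (all relevant $\tau$ exceed $\sigma$, so $|\bar\tau-\tau^*|$ and $|\bar\tau^2-\tau^{*2}|$ are comparable) yields the linear recursion $e_{\ell+1}\lesssim\tfrac1\delta\bigl(\tfrac{R^2B^2}{\sigma^7}e_\ell + \ve_1+\ve_2\bigr)$, which unrolls from $e_0=0$ to $e_L \lesssim (R^2B^2/(\delta\sigma^7))^{L+1}(\ve_1+\ve_2)$. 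Second, \emph{denoiser stability}: the posterior-mean map $\tau\mapsto g^*_\tau$ is Lipschitz in the noise level, so $|\bar\tau_\ell-\tau_\ell^*|\le e_\ell$ forces $g^*_{\bar\tau_\ell}$ to be $L_2$-close to $\optdenoiser{\ell} = g^*_{\tau_\ell^*}$; combined with the per-layer bound, this gives clause (i) for index $\ell$ and shows the layer-$(\ell+1)$ training distribution is $e_{\ell+1}$-close to the true Bayes AMP state $\X+\tau_{\ell+1}^*\Z$, completing the step. Finally, at the output layer, Lemma~\ref{lemma:asymp-amp} gives $\E_{x,A}[\tfrac1d\|\learneddenoiser{L}(A^\top\wh v_L+\wh x_L;\wh\tau_L)-x\|^2] = \E[(\learneddenoiser{L}(\X+\bar\tau_L\Z)-\X)^2] + o_d(1)$, and expanding the square around $\optdenoiser{L}(\X+\tau_L^*\Z)$ with the two stability estimates produces $\mathrm{MSE}_{\mathsf{AMP}}(L) + (R^2B^2/(\delta\sigma^7))^{L+1}(\ve_1+\ve_2) + o_d(1)$.

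I expect the main obstacle to be the quantitative stability estimates of the previous paragraph --- proving that the scalar MMSE curve, and more generally the posterior-mean map, depends on the noise level in a Lipschitz fashion with the stated $R,B,\sigma$ dependence, which requires combining heat-flow regularity of $p(\cdot;\tau)$ (controlling $\tau$-derivatives of the smoothed density and its score) with the sub-Gaussian tail bounds of Assumption~\ref{asm:smoothed-p}, and then tracking how these perturbations compound multiplicatively across the $L$ layers to produce the exponential-in-$L$ prefactor. By contrast, the NTK step is essentially a black-box invocation of~\cite{allen2019learning} once the one-dimensional reduction is in place and the target complexity is bounded. A secondary technical point is ensuring the $d\to\infty$ reductions, and hence the $o_d(1)$ errors, remain uniform over the fixed but arbitrary $L$ layers --- handled because each layer feeds only a bounded-Lipschitz denoiser into the next, keeping the pseudo-Lipschitz test functions of Lemma~\ref{lemma:asymp-amp} under control.
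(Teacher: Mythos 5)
Your proposal follows essentially the same route as the paper: reduce each layer's training problem to one-dimensional scalar denoising via state evolution (the paper's Lemma~\ref{lem:l2-denoiser-err}), invoke the NTK guarantee of~\cite{allen2019learning} after truncating to the sub-Gaussian radius $R(\log 1/\ve_2)^{3/2}$ (Lemmas~\ref{lem:l2-error-learning-finite-dim} and~\ref{lem:allen-learning}), control the drift of the learned state-evolution parameter via a score-perturbation/MMSE-stability estimate with constant $R^2B^2/\sigma^7$ (Lemmas~\ref{lem:err-learned-denoiser-actual-denoiser} and~\ref{lem:perturb-conditional}), and unroll the resulting linear recursion on $|\htau_\ell - \tau_\ell|$ from the exact base case to obtain the $(R^2B^2/(\delta\sigma^7))^{L+1}(\ve_1+\ve_2)$ compounding term. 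Your identification of the noise-scale stability of the posterior-mean map as the main technical burden, with the NTK step as a black-box invocation, matches where the paper's effort actually lies.
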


\noindent Observe that the level of overparametrization in terms of number of samples $n$ and number of hidden neurons $m$ needed in Theorem~\ref{thm:formal-thm} is \emph{dimension-free}, unlike in typical NTK analyses. This happens because state evolution effectively allows us to convert the learning problem in $d$ dimensions to a learning problem in 1 dimension: we can effectively assume that the entries of $\A^\top \bv_\ell + \x_\ell$ converge in an appropriate sense to the distribution of $\X + \tau_\ell \Z$ for $\X\sim p$ and $\Z\sim \normal(0,1)$.

This ensures that the learning objective effectively reduces to minimizing $\mathbb{E}[(f_\ell(\X + \tau_\ell \Z) - \X)^2]$ over a parametrized family of denoisers $f_\ell$. The latter objective is often referred to as the \emph{score matching objective} (in one dimension), which is minimized by the Bayes-optimal denoiser $f_{\ell}^* = \mathbb{E}[\X | \X + \tau_\ell \Z = \cdot]$ at each layer $\ell$. A key component in the proof of Theorem~\ref{thm:formal-thm} is thus to show that gradient descent can learn this optimal denoiser given one-dimensional training data of the form $(\X + \tau_\ell \Z, \X)$.

As we will show, the runtime for gradient descent is largely dictated by the extent to which these denoisers can be polynomially approximated. \emph{A priori}, one might expect that if degree-$s$ polynomials are needed, then the runtime of the algorithm must scale as $d^{O(s)}$. This would be prohibitively expensive if $s$ is increasing in the dimension $d$. Fortunately however, because we are able to reduce to one-dimensional training dynamics, we ultimately achieve much more favorable scaling in $d$.

\subsection{Learning guarantee for a single layer of unrolling}

Here we prove a learning result for one layer of unrolled AMP, which we later extend to give an end-to-end learning result for training the full unrolled network. 

We begin by proving the following claim that training a layer of the unrolled network on gradient descent from random initialization will converge to a denoiser that is competitive with the optimal denoiser under the objective in Eq.~\eqref{eq:denoising-objective-layer-t}.

\begin{lemma}[Learning the denoiser within $L_2$ error]
\label{lem:l2-error-learning-finite-dim}
 For every $\ell$, assume every coordinate of $\hv_\ell$ and $\hx_\ell$ are sub-Gaussian random variables with constant $R$. Then, for every $\ve_2 \in (0, 1)$ and $\ve_1 \in (0, 1/ \samplecomplexity(\mathcal F, R (\log 1/\ve_2)^{3/2} ))$, there exists 
\begin{equation*}
M_0 = \poly( \mathfrak{C}_{\ve_1} (f_\ell^*, R (\log 1/\ve_2)^{3/2} ), 1/\ve_1 ) \;\; \text{ and } \;\; N_0 = \poly( \samplecomplexity (f_\ell^*, R (\log 1/\ve_2)^{3/2} ), 1/\ve_1 )
\end{equation*}
such that for every $m \geq M_0$ and $n \geq N_0$, choosing learning rate $\eta = \Tilde{\Theta}(1/(\ve_1 m))$ and running gradient descent from random initialization for $T = \Tilde{\Theta}( \complexity(\phi, R (\log 1/\ve_2)^{3/2} )^2 p^2 / \ve_1^2 )$, with high probability, 
\begin{equation*}
    \E_{x, y, A} \big[ (  \hf_\ell ( A_1^\top \hv_\ell + \hx_\ell^{(1)}; \theta_T ) - x^{(1)} )^2 \big] \leq \;\; \min_{g \in \mathcal F} \;\; \E \big[ (  g( A_1^\top \hv_\ell + \hx_\ell^{(1)} ) - x^{(1)} )^2 \big] + \ve_1 + \ve_2\,.
\end{equation*}
\end{lemma}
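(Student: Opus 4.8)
The plan is to reduce the $d$-dimensional denoiser-learning problem to a one-dimensional score-matching problem and then invoke the overparametrized-network learning theory of \cite{allen2019learning}. First, I would observe that under the sub-Gaussianity hypothesis on the coordinates of $\hv_\ell$ and $\hx_\ell$, together with the Onsager correction built into the forward pass (Eqs.~\eqref{eq:xfwd}--\eqref{eq:vfwd}), the scalar random variable $A_1^\top \hv_\ell + \hx_\ell^{(1)}$ is itself sub-Gaussian with a parameter controlled by $R$ (and the problem constants $\delta, \sigma$); moreover by the state evolution characterization in Lemma~\ref{lemma:asymp-amp} its law is close, in the proportional asymptotic limit, to that of $\X + \tau_\ell \Z$. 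The key point is that the training objective in Eq.~\eqref{eq:denoising-objective-layer-t} is exactly a one-dimensional regression problem: predict $x^{(1)}$ from the scalar input $u \triangleq A_1^\top \hv_\ell + \hx_\ell^{(1)}$, with sub-Gaussian covariate $u$ and sub-Gaussian label. This is precisely the setting in which \cite{allen2019learning} proves that a one-hidden-layer ReLU network of the form Eq.~\eqref{eq:learner-function}, trained by online SGD with fresh samples, learns any target concept expressible as an infinite-width combination of the same activations, with sample/width/iteration complexity governed by the complexity measures $\complexity(\cdot,\alpha)$ and $\samplecomplexity(\cdot,\alpha)$ of Definition~\ref{def:denoiser-complexity} at the scale $\alpha$ set by the sub-Gaussian norm of the covariate.

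Second, I would instantiate their theorem with the target function class $\mathcal F$ and the scale $\alpha = R(\log 1/\ve_2)^{3/2}$. The factor $(\log 1/\ve_2)^{3/2}$ arises because a sub-Gaussian covariate concentrates on an interval of radius $O(R\sqrt{\log(1/\ve_2)})$ except with probability $\ve_2$; restricting attention to this high-probability event (and absorbing the tail contribution to the loss into the additive $\ve_2$ term, using sub-Gaussianity of the label to bound the second moment on the complement) effectively makes the problem one of approximation on a bounded interval of that radius. The extra powers come from the need to control higher moments in the truncation argument — I would make this quantitative by a standard union-bound-plus-tail-integral estimate. Given this truncation, \cite{allen2019learning} yields: with $m \ge M_0 = \poly(\complexity_{\ve_1}(f^*_\ell, \alpha), 1/\ve_1)$ neurons, $n \ge N_0 = \poly(\samplecomplexity(f^*_\ell,\alpha), 1/\ve_1)$ samples, learning rate $\eta = \widetilde\Theta(1/(\ve_1 m))$, and $T = \widetilde\Theta(\samplecomplexity(\phi,\alpha)^2/\ve_1^2)$ steps (with the $p^2$ factor tracking dependence on the number of outputs/layers in their formulation), the trained network attains population loss within $\ve_1$ of $\inf_{g \in \mathcal F}$ of the (truncated) population risk. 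Adding back the truncation error $\ve_2$ gives the claimed bound, where $\ve_1 < 1/\samplecomplexity(\mathcal F, \alpha)$ is exactly the regime in which their guarantee is non-vacuous.

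Third, a couple of technical checks remain: (i) the covariate distribution is not literally Gaussian or a fixed distribution but depends on $d$ through the (random) upstream iterates, so I would argue that the learning guarantee of \cite{allen2019learning} only uses sub-Gaussianity of the covariate and boundedness of low-order moments — both guaranteed by hypothesis — and in particular does not need a product or i.i.d. structure for a single coordinate; (ii) since we use a single coordinate and fresh $(x,y,A)$ at each step, the samples fed to SGD are genuinely i.i.d., so their online analysis applies verbatim without needing to handle correlations across coordinates (this is exactly the simplification flagged in the footnote). The main obstacle I anticipate is item (i) combined with the truncation bookkeeping: carefully matching the sub-Gaussian parameter of $A_1^\top \hv_\ell + \hx_\ell^{(1)}$ to the scale $\alpha$ at which the complexity of $f^*_\ell$ is measured, and making sure the error incurred by passing to the truncated problem — and by the mismatch between the empirical covariate law and $\X + \tau_\ell\Z$ — is absorbed cleanly into $\ve_2$ plus the $o_d(1)$ terms that will appear when this lemma is chained across layers in the proof of Theorem~\ref{thm:formal-thm}. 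Everything else is a direct appeal to the cited overparametrized-learning machinery.
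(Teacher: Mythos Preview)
Your approach is essentially the paper's: truncate the scalar input $A_1^\top \hv_\ell + \hx_\ell^{(1)}$ to a high-probability interval, then invoke the overparametrized learning result of \cite{allen2019learning} (stated in the paper as Lemma~\ref{lem:allen-learning}) at that truncation scale to get the $\ve_1$ term, with the tail event contributing the $\ve_2$ term.

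Two small corrections. First, your derivation of the exponent $3/2$ is off. The hypothesis is that the \emph{coordinates} of $\hv_\ell$ and $\hx_\ell$ are $R$-sub-Gaussian, not that the scalar $A_1^\top \hv_\ell + \hx_\ell^{(1)}$ is; so the bound is obtained in two steps: for fixed $A$, sub-Gaussianity of the entries of $\hv_\ell$ gives $|A_1^\top \hv_\ell| \le R\|A_1\|\sqrt{\log(1/\ve_2)}$ with probability $1-\ve_2$, and then a separate concentration bound on the Gaussian row gives $\|A_1\| \lesssim \log(1/\ve_2)$, whence the product yields $R(\log(1/\ve_2))^{3/2}$. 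It has nothing to do with ``higher moments in the truncation argument.'' Second, state evolution (Lemma~\ref{lemma:asymp-amp}) plays no role in this lemma at all --- the statement and proof are purely finite-dimensional, and the comparison of the covariate law to $\X + \tau_\ell \Z$ only enters later, in Lemma~\ref{lem:l2-denoiser-err}. You can drop all of the asymptotic discussion here. (Also, the $p$ in the iteration bound is the number of terms in the target decomposition $F^* = \sum_{i=1}^p a_i^* \phi_i(\cdot)$ from Lemma~\ref{lem:allen-learning}, not a layer/output count.)
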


\noindent This is a consequence of the following result on training neural networks in the NTK regime:

\begin{lemma}[Theorem 1 of \cite{allen2019learning}\footnote{Here we state the result in terms of gradient descent instead of \emph{stochastic} gradient descent as in~\cite{allen2019learning} However, the same proof of \cite{allen2019learning} goes through upon slightly modifying Lemma B.4 therein. In Lemma B.4, we can write $\| W_{t+1} - W^* \|_F^2 = \| W_t - \eta \nabla L_F( \mathcal Z, W_t ) - W^* \|_F^2$ and therefore, getting the equality of $2 \eta \langle W_t - W^*, \nabla L_F( \mathcal Z, W_t ) \rangle = ( \| W_t - W^* \|_F^2 - \| W_{t+1} - W^* \|_F^2 ) / 2 \eta + (\eta/2) \| \nabla L_F( \mathcal Z, W_t ) \|_F^2 $ and using this equality in Eq.(B.7) of~\cite{allen2019learning}. The rest of the proof remains the same.}]
\label{lem:allen-learning}
    Consider a target function $F^* : \R^d \to \R$ of the following form 
    \begin{equation*}
        F^*(x) = \sum_{i=1}^p a_i^* \phi_i( \iprod{ w_{i,1}^*, x } ) \iprod{ w_{i,2}^*, x }
    \end{equation*}
    where each $\phi : \R \to \R$ is infinite-order smooth and weights satisfy $\| w_{i, 1}^* \|, \| w_{i, 2}^* \| \leq 1$ and $| a_{i}^* | \leq 1.$ Additionally, assume that $\| x \| \leq B$. Then, for every $\ve \in (0, 1/(p \complexity (\phi, B) ))$, there exists $M_0 = \poly( \complexity (\phi, B), 1/\ve )$ and $N_0 = \poly( \complexity (\phi, B), 1/\ve )$ such that for every $m \geq M_0$ and $n \geq N_0$, choosing learning rate $\eta = \Tilde{\Theta}(1/(\ve m))$ and running gradient descent from random initialization for $T = \Tilde{\Theta}( \complexity(\phi, B)^2 p^2 / \ve^2 )$, with high probability, 
    \begin{equation*}
        \E[ ( N( x; \theta_T ) - y )^2 ] \leq \inf_{g \in \mathcal F} \; \E[ ( g(x) - y )^2 ] + \ve .
    \end{equation*}
    Additionally, the absolute value of the neural network is bounded by $| N( x; \theta_t ) | \lesssim \Tilde{\Theta}(\complexity(\phi, B))$ for all $x$ with $\| x \| \leq B$ for all $t$.
\end{lemma}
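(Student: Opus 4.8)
The plan is to prove this as a learning result in the neural tangent kernel (NTK) / lazy-training regime for overparametrized two-layer ReLU networks, which is the setting of the learner $N(x;\theta_t)$ in Eq.~\eqref{eq:learner-function}. Since only the hidden weights $w_j$ are trained from the Gaussian initialization $w_j^{[0]}, b_j^{[0]} \sim \normal(0,1/m)$ and $a_j^{[0]} \sim \normal(0,\ve_a)$, the central device is a coupling between the true nonlinear gradient-descent dynamics and a linearized surrogate in which every ReLU activation pattern is frozen at its value at initialization. I would organize the argument into an approximation step, an optimization step, and a generalization step, with the complexity measure $\complexity(\phi, B)$ of Definition~\ref{def:denoiser-complexity} controlling the overparametrization, sample size, and iteration count throughout.

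First I would establish the approximation (existence) step: with high probability over the initialization there is a weight adjustment $W^\diamond$ of controlled Frobenius norm such that the network with hidden weights $w_j^{[0]} + W_j^\diamond$ approximates the target $F^*$ uniformly on $\{\|x\| \le B\}$ to error $O(\ve)$. This is exactly where $\complexity(\phi, B)$ enters. Expanding each factor $\phi_i(\langle w_{i,1}^*, x\rangle)\langle w_{i,2}^*, x\rangle$ via the power series $\phi_i(z) = \sum_k c_k z^k$ and approximating each resulting monomial by a combination of random ReLU features shows that the required budget $\|W^\diamond\|$, and hence $M_0$, scales polynomially in $\complexity(\phi,B)$; the per-degree weight $(CB)^i$ and the factor $1 + (\log(1/\ve)/i)^{i/2}$ in the definition are precisely the cost and the concentration overhead of this random-feature construction, and the product structure $\phi_i(\cdot)\langle w_{i,2}^*,\cdot\rangle$ of the target is what a two-layer network with trainable hidden layer can reproduce.

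Next I would carry out the optimization step by coupling the trajectory to a convex surrogate. I define the pseudo-network in which each indicator $\mathbbm{1}[\langle w_j, x\rangle + b_j > 0]$ is fixed to its value at initialization; this surrogate is linear in the weight perturbation, so its empirical loss is convex. Using anti-concentration of the Gaussian pre-activations, I would show that as long as the weights stay within a radius $\rho$ of initialization only an $O(\rho\sqrt m)$ fraction of neurons flip sign, so the true network, the pseudo-network, and their gradients remain uniformly close, giving a semi-smoothness estimate. Combined with the approximation bound this yields an approximate-convexity inequality $L(\theta_t) - L(W^\diamond) \lesssim \langle \nabla L(\theta_t), \theta_t - W^\diamond\rangle + (\text{coupling error})$; a standard regret/descent argument over $T = \Tilde{\Theta}(\complexity(\phi,B)^2 p^2/\ve^2)$ steps with $\eta = \Tilde{\Theta}(1/(\ve m))$ then drives the empirical risk to within $\ve$ of $\inf_g \E[(g(x)-y)^2]$ while certifying that the iterates never leave radius $\rho$, which in turn gives the magnitude bound $|N(x;\theta_t)| \lesssim \Tilde{\Theta}(\complexity(\phi,B))$. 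Finally I would transfer the empirical guarantee to the population distribution by bounding the Rademacher complexity of the class of networks reachable within radius $\rho$; since the movement is small this complexity is controlled, and $n \ge N_0 = \poly(\complexity(\phi,B), 1/\ve)$ samples suffice for uniform convergence.

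The hardest part will be the coupling / semi-smoothness analysis in the optimization step: controlling the sign-flip perturbations through anti-concentration and showing that the nonlinear loss inherits enough convexity from its linearized surrogate for gradient descent to reach a near-global minimizer of the empirical risk. A secondary but necessary technical point for this paper's use of the result is converting the original stochastic-gradient statement of \cite{allen2019learning} to full-batch gradient descent; as the footnote indicates, this only requires reworking the one-step potential bound (their Lemma B.4) by expanding $\|W_{t+1} - W^*\|_F^2$ and substituting the resulting identity for $2\eta\langle W_t - W^*, \nabla L_F(\mathcal Z, W_t)\rangle$, after which the remaining steps go through unchanged.
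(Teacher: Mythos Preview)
Your proposal is correct in spirit, but note that the paper does not actually prove this lemma: it is stated as a direct citation of Theorem~1 of \cite{allen2019learning}, and the only proof content the paper supplies is the footnote explaining how to adapt the original SGD statement to full-batch gradient descent via the one-step potential identity for $\|W_{t+1}-W^*\|_F^2$. You have correctly identified both this adaptation and the underlying three-stage structure (existence/approximation via random ReLU features governed by $\complexity(\phi,B)$, optimization via coupling to the sign-frozen pseudo-network, and generalization via Rademacher complexity of the reachable weight ball) of the cited proof, so your sketch is an accurate expansion of what the paper takes as a black box.
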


\begin{proof}[Proof of Lemma~\ref{lem:l2-error-learning-finite-dim}]
    As the distribution over $\hv_\ell$ is $R$-sub-Gaussian, for a fixed $A$, we have $| A_i^\top \hv_\ell | \leq R \| A_i \| \sqrt{ \log (1/\ve_2) }$ with probability at least $1 - \ve_2$. Additionally, because of $A_{ij} \sim \normal(0, \Id / m)$, we have $\| A_i \| \lesssim \log (1/\ve_2)$ with probability at least $1 - \ve_2$. Combining both bounds, we have $| A_i^\top \hv_\ell | \lesssim R ( \log (1 / \ve_2) )^{3/2}$. Similarly, using sub-Gaussianity of $\hx_{\ell}$, we have $| \hx_{\ell}^{(i)} | \lesssim R \sqrt{\log (1/\ve_2)}$. This gives us that $| A_i^\top \hv_\ell + \hx_{\ell}^{(i)} | \lesssim R ( \log (1 / \ve_2) )^{3/2}$ with probability at least $1 - \ve_2$. 

    By only considering samples satisfying $| A_i^\top \hv_\ell + \hx_{\ell}^{(i)} | \lesssim R ( \log (1 / \ve_2) )^{3/2}$,\footnote{The reason we can do this is that this condition fails to hold only with probability at most $\ve_2$, and whenever it fails to hold, we can output 0 and pay an additional $\ve_2 \cdot \E[x^2]$. Alternatively, we could also modify the learner network to implement the indicator of $| A_i^\top \hv_\ell + \hx_{\ell}{i} | \lesssim R ( \log (1 / \ve_2) )^{3/2}$ using an appropriate linear combination of ReLU activations without learnable parameters.}
    we can apply Lemma~\ref{lem:allen-learning} to obtain a neural network that achieves $\ve_1$ error. 
\end{proof}

\noindent Next, we relate the error in Lemma~\ref{lem:l2-error-learning-finite-dim}, which is for predicting the first coordinate of the signal given the noisy estimate provided by the previous layer of the unrolled network, to the optimal error for predicting a sample from the univariate prior $p$ given a Gaussian corruption. This will follow by state evolution.

\begin{lemma}
\label{lem:l2-denoiser-err}
    Let $\hf_\ell (\cdot, \theta_t)$ be the learned neural network using gradient descent after $t$ timesteps such that the conditions of Lemma~\ref{lem:l2-error-learning-finite-dim} satisfies. Then, with high probability, we have 
    \begin{equation*}
        \lim_{d \to \infty}  \E_{x, A} \big[ \frac{1}{d} \|  \hf_\ell ( A^\top \hv_\ell + \hx_\ell, \theta_T ) - x \|^2 \big] \leq \E[ ( \widehat{f}_\ell^* ( \X + \htau_\ell \Z ) - \X )^2 ] + \ve_1 + \ve_2\,.
    \end{equation*}
    Additionally, the following statement holds with high probability: 
    \begin{equation*}
        \E[ (\hf_\ell( \X + \htau_\ell \Z, \theta_T ) - \X)^2 ] \leq \E[ ( \widehat{f}_\ell^* ( \X + \htau_\ell \Z ) - \X )^2 ] + \ve_1 + \ve_2 \,.
    \end{equation*}
\end{lemma}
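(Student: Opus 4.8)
The plan is to obtain both displays from the single-coordinate, finite-dimensional guarantee of Lemma~\ref{lem:l2-error-learning-finite-dim} by transporting it to one dimension via state evolution (Lemma~\ref{lemma:asymp-amp}); once in one dimension, the inequality closes because $\hf^*_\ell$ is, by definition, the denoiser minimizing $\E[(g(\X+\htau_\ell\Z)-\X)^2]$ over the relevant class. The two displays are really the same assertion viewed coordinatewise versus averaged over all $d$ coordinates, so the substance is the $d\to\infty$ transfer; everything below is understood in the proportional asymptotic limit, where $\htau_\ell$ concentrates to the state-evolution parameter $\tau_\ell$.

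First I would freeze the training randomness so that $\hf_0,\dots,\hf_\ell$ become fixed scalar functions, and condition on the high-probability event of Lemma~\ref{lem:l2-error-learning-finite-dim} / Lemma~\ref{lem:allen-learning}: on this event, each $\hf_k(\cdot;\theta_T)$ is uniformly bounded and Lipschitz on the truncated window $\{|u|\lesssim R(\log 1/\ve_2)^{3/2}\}$ from that proof, and the stated $L_2$ bound holds. Together with the sub-Gaussianity of the coordinates of $\hv_k,\hx_k$, this makes the test function $\psi(a,b)=(\hf_\ell(a)-b)^2$ — restricted to that window, paying the same $O(\ve_2)$ outside it as in the proof of Lemma~\ref{lem:l2-error-learning-finite-dim} — effectively pseudo-Lipschitz. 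Hence Lemma~\ref{lemma:asymp-amp}, run with the learned denoisers $\hf_0,\dots,\hf_{\ell-1}$ in the recursion, applies: the empirical law of the coordinates of $A^\top\hv_\ell+\hx_\ell$ converges to that of $\X+\tau_\ell\Z$, and $\htau_\ell=\tfrac1{\sqrt m}\|\hv_\ell\|\to\tau_\ell$ almost surely (the fact recorded at the end of Section~\ref{sec:amp_overview}).

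I would then prove the second display and read off the first. Passing to the limit in Lemma~\ref{lem:l2-error-learning-finite-dim}: its left-hand side converges to $\E[(\hf_\ell(\X+\htau_\ell\Z;\theta_T)-\X)^2]$, where almost-sure convergence from state evolution is upgraded to convergence in expectation by uniform integrability (the sub-Gaussian iterates and the growth bound on $\hf_\ell$ control the relevant second moments); and its right-hand side $\min_{g\in\mathcal F}\E[(g(A_1^\top\hv_\ell+\hx_\ell^{(1)})-x^{(1)})^2]$ has $\limsup$ at most $\min_{g\in\mathcal F}\E[(g(\X+\htau_\ell\Z)-\X)^2]$ (bound the finite-$d$ minimum by its value at a minimizer of the one-dimensional problem, then take the limit). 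Since this one-dimensional minimum equals $\E[(\hf^*_\ell(\X+\htau_\ell\Z)-\X)^2]$, we obtain the second display (that the denoising error is continuous in the noise level makes the appearance of $\htau_\ell$ versus $\tau_\ell$ immaterial). The first display is then immediate: applying Lemma~\ref{lemma:asymp-amp} with $\psi(a,b)=(\hf_\ell(a)-b)^2$ to the whole vector, $\tfrac1d\|\hf_\ell(A^\top\hv_\ell+\hx_\ell;\theta_T)-x\|^2=\tfrac1d\sum_{i=1}^d(\hf_\ell((A^\top\hv_\ell+\hx_\ell)^{(i)})-x^{(i)})^2$ converges, almost surely and in expectation, to the same one-dimensional quantity $\E[(\hf_\ell(\X+\htau_\ell\Z;\theta_T)-\X)^2]$ just bounded.

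I expect the main obstacle to be reconciling the two regimes at the level of hypotheses: Lemma~\ref{lemma:asymp-amp} is stated for a fixed pseudo-Lipschitz test function and fixed denoisers, whereas $\hf_\ell$ is a random, data-dependent network whose regularity comes only from the NTK analysis and only on a truncated domain. Matching that truncation to the window in which state evolution delivers coordinatewise concentration, and upgrading almost-sure to expectation convergence uniformly over the high-probability training event (rather than for one realization of $\theta_T$), are the delicate points; the dependence of $\hf_\ell$ on samples reused across earlier layers is already neutralized by the fresh-sample assumption and introduces no further coupling here.
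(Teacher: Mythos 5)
Your proposal follows essentially the same route as the paper: transfer the single-coordinate guarantee of Lemma~\ref{lem:l2-error-learning-finite-dim} to the $d\to\infty$ limit via state evolution with $\psi(a,b)=(a-b)^2$, interchange the limit with the minimization over $g$, and identify the one-dimensional minimizer as $\hf^*_\ell(\cdot)=\E[\X\mid\X+\htau_\ell\Z=\cdot]$. If anything, your one-sided $\limsup$ argument for the min--limit interchange and your attention to the pseudo-Lipschitzness and data-dependence of the test function are more careful than the paper's brief appeal to the monotone convergence theorem.
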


\begin{proof}
    Observe that $A_j^\top \hv_\ell + \hx_\ell^{(j)}$ follows the same distribution for all $j$. Therefore, using Lemma~\ref{lem:l2-error-learning-finite-dim}, we obtain that 
    \begin{equation*}
        \E_{x, A} \big[ (  \hf_\ell ( A_j^\top \hv_\ell + \hx_\ell^{(j)}, \theta_T ) - x^{(j)} )^2 \big] \leq \;\; \min_g \; \E \big[ (  g( A_j^\top \hv_\ell + \hx_\ell^{(j)} ) - x^{(j)} )^2 \big] + \ve_1 + \ve_2.
    \end{equation*}
    As the distribution of $A_j^\top \hv_\ell + \hx_\ell^{(j)}$ is the same for all $j \in [d]$, we use the same learned denoiser for all the coordinates. Therefore, we can rewrite the above equation as 
    \begin{equation*}
        \frac{1}{d} \sum_{i=1}^T  \E_{x, A} \big[ \|  \hf ( A^\top \hv_\ell + \hx_\ell, \theta_T ) - x \|^2 \big] \leq \;\; \min_g \; \frac{1}{d} \E \big[ \|  g( A^\top \hv_\ell + \hx_\ell ) - x \|^2 \big] + \ve_1 + \ve_2.
    \end{equation*}
    Now, we want to use state evolution as $d \to \infty$. Note that as $d \to \infty$, using Lemma~\ref{lemma:asymp-amp} with $\psi$ function as $\psi(a, b) = (a - b)^2$, we have
    \begin{equation*}
        \lim_{d \to \infty} \frac{1}{d} \|  g ( A^\top \hv_\ell + \hx_\ell ) - x \|^2 = \E[ (g(\X + \htau_\ell \Z) - \X )^2 ]
    \end{equation*}
    for any function $g \in \mathcal F$. As the quantity inside expectation converges to $\E[ (g(\X + \htau_\ell \Z) - \X )^2 ]$, using monotone convergence theorem, we have
    \begin{align*}
         \lim_{d \to \infty} \min_g \E [ \frac{1}{d} \| g ( A^\top \hv_\ell + \hx_\ell ) - x \|^2  ] & = \min_g \lim_{d \to \infty} \E [ \frac{1}{d} \| g ( A^\top \hv_\ell + \hx_\ell ) - x \|^2  ] \\ 
        &= \min_g \; \E[ ( g ( \X + \htau_\ell \Z ) - \X )^2 ].
    \end{align*}
    Using this result and applying limits on both sides of Lemma~\ref{lem:l2-error-learning-finite-dim}, we obtain 
    \begin{equation*}
        \lim_{d \to \infty}  \E_{x, A} \Big[ \frac{1}{d} \|  \hf_\ell ( A^\top \hv_\ell + \hx_\ell, \theta_T ) - x \|^2 \Big] \leq \min_g \; \E[ ( g ( \X + \htau_\ell \Z ) - \X )^2 ] + \ve_1 + \ve_2.
    \end{equation*}
    The minimizer of $\E[ ( g ( \X + \htau_\ell \Z ) - \X )^2 ]$ is given by $\hf_\ell^*( \cdot ) = \E[ \X \; | \; \X + \htau_\ell \Z = \cdot ]$. Using this fact, we obtain the first result of the lemma statement. Similar to the proof of the right side of the inequality, the quantity on the left side converges to $\E[ (\hf_\ell (\X + \htau_\ell \Z, \theta_T) - \X )^2 ]$ using the monotone convergence theorem. This gives the second result of the lemma statement. 
\end{proof}

\subsection{Stability of optimal denoisers}

\noindent The right-hand side of the bound in the above Lemma corresponds to the minimum mean squared error achievable for denoising at noise scale $\wh{\tau}_\ell$, where $\wh{\tau}_\ell$ is the state evolution parameter corresponding to running the \emph{learned} AMP iterations up to that layer of the unrolled network. To show that the learned network can compete with Bayes AMP, we need to relate $\wh{\tau}_\ell$ to the corresponding state evolution parameter $\tau_\ell$ given by Bayes AMP. For this, we need the following stability result showing that the minimum mean-squared error is stable with respect to perturbations of the noise scale.

\begin{lemma}
\label{lem:err-learned-denoiser-actual-denoiser}
    Let prior $\X$ be such that the score function $\partial_1 p( \cdot ; \tau)$ is $B$-Lipschitz continuous for all $\tau$ where $p(\cdot; \tau)$ denotes the probability density function of random variable $\X + \tau \Z$. Additionally, assume that the variance of $\X$ is bounded by $V$. Then, we have
    \begin{equation*}
        \E[ ( \hf_\ell^* ( \X + \htau_\ell \Z ) - \X )^2 ] \lesssim \E[ (f_\ell^* ( \X + \tau_\ell \Z ) - \X )^2 ]  + \frac{ V^2 B^2 }{ \sigma^6 } | \htau_\ell - \tau_\ell | 
    \end{equation*}
    where $\hf_\ell^*(x) = \E[ \X | \X + \htau_\ell \Z = x ]$ and $f_\ell^*(x) = \E[ \X | \X + \tau_\ell \Z = x ]$.
\end{lemma}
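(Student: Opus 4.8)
The plan is to compare the two minimum mean-squared errors at noise scales $\tau_\ell$ and $\htau_\ell$ via a one-sided perturbation argument built on the Tweedie form of the Bayes-optimal denoiser. Write $m(\tau) := \min_g \E[(g(\X + \tau \Z) - \X)^2] = \E[(f^*_\tau(\X + \tau\Z) - \X)^2]$ for the MMSE at noise scale $\tau$, where $f^*_\tau(y) := \E[\X \mid \X + \tau\Z = y]$, so the claim reads $m(\htau_\ell) \lesssim m(\tau_\ell) + \tfrac{V^2 B^2}{\sigma^6}\,|\htau_\ell - \tau_\ell|$. I first reduce to the case $\htau_\ell > \tau_\ell$: if $\htau_\ell \le \tau_\ell$ then $\X + \tau_\ell\Z \stackrel{d}{=} (\X + \htau_\ell\Z) + \sqrt{\tau_\ell^2 - \htau_\ell^2}\,\Z'$ for a standard Gaussian $\Z'$ independent of $(\X,\Z)$, so the scale-$\tau_\ell$ problem is a degradation of the scale-$\htau_\ell$ problem and $m(\htau_\ell) \le m(\tau_\ell)$, which is stronger than needed. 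So fix $\htau_\ell > \tau_\ell$ and set $\Delta := \sqrt{\htau_\ell^2 - \tau_\ell^2}$.

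Next I bound $m(\htau_\ell)$ by plugging the \emph{suboptimal} denoiser $f^*_{\tau_\ell}$ into the scale-$\htau_\ell$ problem. Coupling $\X + \htau_\ell\Z \stackrel{d}{=} U + W$ with $U := \X + \tau_\ell\Z$ and $W := \Delta\Z'$ ($\Z' \sim \mathsf N(0,1)$ independent of $(\X,\Z)$), we have $m(\htau_\ell) \le \E[(f^*_{\tau_\ell}(U+W) - \X)^2]$. Expanding the square and using that, conditionally on $W$, the increment $f^*_{\tau_\ell}(U+W) - f^*_{\tau_\ell}(U)$ is a function of $U$ alone while $\E[f^*_{\tau_\ell}(U) - \X \mid U] = 0$ (optimality of the posterior mean), the cross term vanishes and we are left with
\[
  m(\htau_\ell) - m(\tau_\ell) \;\le\; \E\big[\big(f^*_{\tau_\ell}(U+W) - f^*_{\tau_\ell}(U)\big)^2\big].
\]

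To control the increment I invoke Tweedie's formula: $f^*_{\tau_\ell}(y) = y + \tau_\ell^2\,\partial_1\log p(y;\tau_\ell)$, so $(f^*_{\tau_\ell})'(y) = \tau_\ell^{-2}\,\mathrm{Var}(\X \mid U = y) \ge 0$. Assuming a uniform bound $\|(f^*_{\tau_\ell})'\|_\infty \le \kappa$, the increment satisfies $|f^*_{\tau_\ell}(U+W) - f^*_{\tau_\ell}(U)| \le \kappa |W|$, hence $\E[(f^*_{\tau_\ell}(U+W) - f^*_{\tau_\ell}(U))^2] \le \kappa^2 \E[W^2] = \kappa^2 (\htau_\ell^2 - \tau_\ell^2) = \kappa^2 (\htau_\ell + \tau_\ell)\,|\htau_\ell - \tau_\ell|$. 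It remains to exhibit $\kappa$ polynomial in $V$, $B$, $1/\sigma$, and to bound $\htau_\ell + \tau_\ell$ (the state evolution parameters are sandwiched between $\sigma$ and a quantity controlled by $V$); collecting the factors of $V$, $B$ and $\sigma^{-1}$ — one $\sigma^{-2}$ coming from $\tau_\ell^{-2} \le \sigma^{-2}$ in Tweedie, the rest from the bounds on $\kappa$ and on $\htau_\ell + \tau_\ell$ — yields the asserted $\tfrac{V^2 B^2}{\sigma^6}$.

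The main obstacle is precisely the uniform Lipschitz bound $\|(f^*_{\tau_\ell})'\|_\infty \le \kappa$, equivalently that $\mathrm{Var}(\X \mid \X + \tau_\ell\Z = y)$ is bounded uniformly in $y$ by a quantity controlled by $B, V, \sigma$. The naive estimate $(\log p(\cdot;\tau_\ell))'' \le \partial_1^2 p(\cdot;\tau_\ell)/p(\cdot;\tau_\ell) \le B/p(\cdot;\tau_\ell)$ blows up in the tails where $p(\cdot;\tau_\ell)$ is exponentially small, so one must instead use the quadratic decay of the Gaussian smoothing kernel at scale $\tau_\ell \ge \sigma$ — the same regularizing mechanism behind the score-Lipschitzness noted after Assumption~\ref{asm:smoothed-p} — to show that the posterior of $\X$ given $\X + \tau_\ell\Z = y$ concentrates enough that its variance stays bounded, and to extract the correct dependence on $B$, $V$, $\sigma$. (An alternative route sidesteps the suboptimal denoiser by differentiating $m$ directly, via the identity $m(\tau) = \tau^2(1 - \tau^2 J_\tau)$ with $J_\tau$ the Fisher information of $p(\cdot;\tau)$, obtained from Tweedie and Gaussian integration by parts, and then bounding the variation of $J_\tau$ along the heat flow; but it needs the same tail control.)
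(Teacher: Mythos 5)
Your proposal is correct in outline and takes a genuinely different route from the paper. The paper inserts $\hf_\ell^*(\X+\tau_\ell\Z)$ as an intermediate quantity, bounds the resulting increment by Lipschitzness of $\hf_\ell^*$ in its argument, handles the cross terms by Cauchy--Schwarz, and then compares the two posterior means $\hf_\ell^*$ and $f_\ell^*$ \emph{at the common noise scale} $\tau_\ell$ via a separate score-perturbation lemma (Lemma~\ref{lem:perturb-conditional}, which rests on Lemma~C.11 of \cite{lee2022convergence} plus Tweedie). You instead observe that both sides of the claim are MMSEs, reduce to $\htau_\ell>\tau_\ell$ by a degradation argument (which also recovers, for free, the monotonicity $\htau_\ell\ge\tau_\ell$ that the paper separately invokes), and then plug the scale-$\tau_\ell$ optimal denoiser into the scale-$\htau_\ell$ problem; the orthogonality of the posterior-mean residual kills the cross term exactly, leaving only $\E[(f^*_{\tau_\ell}(U+W)-f^*_{\tau_\ell}(U))^2]\le\kappa^2(\htau_\ell^2-\tau_\ell^2)$. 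This is cleaner: it needs no score-perturbation lemma and no Cauchy--Schwarz, and it isolates the entire analytic content in one uniform derivative bound $\kappa$ on $f^*_{\tau_\ell}$.

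The one thing you flag as the ``main obstacle'' is not actually an obstacle under the lemma's hypothesis, and you should close it rather than leave it open. The assumption (as the paper uses it in Lemma~\ref{lem:perturb-conditional} and in its own proof, despite the ambiguous notation $\partial_1 p$ in the statement) is that the \emph{log}-density derivative $s_\tau=\partial_1\log p(\cdot;\tau)$ is $B$-Lipschitz, i.e.\ $|s_\tau'|\le B$ pointwise. Tweedie then gives $(f^*_{\tau})'(y)=1+\tau^2 s_\tau'(y)$, hence $\kappa\le 1+\tau_\ell^2 B$ with no tail analysis of the posterior variance whatsoever; your worry about $p(\cdot;\tau_\ell)$ being exponentially small in the tails arises only if one reads the assumption as Lipschitzness of $\partial_1 p$ itself. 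With $\kappa\le 1+\tau_\ell^2B$ and $\sigma\le\tau_\ell\le\htau_\ell\lesssim\mathrm{poly}(V,\sigma)$ from state evolution, your bound becomes $(1+\tau_\ell^2B)^2(\htau_\ell+\tau_\ell)\,|\htau_\ell-\tau_\ell|$, which is a polynomial in $V$ and $B$ (with no inverse powers of $\sigma$ needed) rather than literally $V^2B^2/\sigma^6$; since the paper's own constant-tracking in this lemma is loose and the downstream argument only needs some fixed polynomial prefactor, this discrepancy is cosmetic, but you should state the resulting polynomial explicitly rather than asserting the paper's.
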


\begin{proof}
    Rewriting the error between $f_\ell^*$ and $\hf_\ell^*$, we have
    \begin{align}
        \E[ ( \hf_\ell^* ( \X + \htau_\ell \Z ) - \X )^2 ] = & \; \E[ ( \hf_\ell^* ( \X + \htau_\ell \Z ) - \hf_\ell^* ( \X + \tau_\ell \Z ) + \hf_\ell^* ( \X + \tau_\ell \Z ) -  \X )^2 ] \\
        = & \; \E[ ( \hf_\ell^* ( \X + \htau_\ell \Z ) - \hf_\ell^* ( \X + \tau_\ell \Z ) )^2 ] \label{eq:lipschitz-err} \\ 
        &\qquad+ 2 \E[ ( \hf_\ell^* ( \X + \htau_\ell \Z ) - \hf_\ell^* ( \X + \tau_\ell \Z ) ) (  \hf_\ell^* ( \X + \tau_\ell \Z ) -  \X ) ] \label{eq:cross-term-tau} \\
        &\qquad+ \E[ (  \hf_\ell^* ( \X + \tau_\ell \Z ) -  \X )^2 ]\,.  \label{eq:square-term-tau}
    \end{align}
    The term in Eq.~\eqref{eq:lipschitz-err} can be upper bounded by $(B ( \htau_\ell - \tau_\ell ) )^2 / \htau_\ell^4 $ because $\hf_\ell$ is Lipschitz by assumption. Using Cauchy-Schwartz for the term in Eq.~\eqref{eq:cross-term-tau}, the second term is upper bounded by
    \begin{align}
        2 & \E[ ( \hf_\ell^* ( \X + \htau_\ell \Z ) - \hf_\ell^* ( \X + \tau_\ell \Z ) ) (  \hf_\ell^* ( \X + \tau_\ell \Z ) -  \X ) ] \nonumber \\ 
        & \quad \leq \; 2 \E[ ( \hf_\ell^* ( \X + \htau_\ell \Z ) - \hf_\ell^* ( \X + \tau_\ell \Z ) )^2 ]^{1/2} \E[ (  \hf_\ell^* ( \X + \tau_\ell \Z ) -  \X )^2 ]^{1/2} \nonumber \\
        & \quad \leq \; 2 B V | \htau_\ell - \tau_\ell | / \htau_\ell^2\,. \label{eq:err1}
    \end{align}
    The squared term in Eq.~\eqref{eq:square-term-tau} can be upper bounded by 
    \begin{align*}
        \E[ (  \hf_\ell^* ( \X + \tau_\ell \Z ) -  \X )^2 ] = & \; \E[ ( (  \hf_\ell^* ( \X + \tau_\ell \Z ) -  f_\ell^* ( \X + \tau_\ell \Z ) )^2 ) ] \\
        & \; + 2 \E[ (  \hf_\ell^* ( \X + \tau_\ell \Z ) -  f_\ell^* ( \X + \tau_\ell \Z ) ) (  f_\ell^* ( \X + \tau_\ell \Z ) -  \X ) ] \\
        & \; + \E[(  f_\ell^* ( \X + \tau_\ell \Z ) -  \X )^2] \,.
    \end{align*}
    We can upper bound the first term above using Lemma~\ref{lem:perturb-conditional}. Likewise, the second term can be bounded by Cauchy-Schwarz and Lemma~\ref{lem:perturb-conditional}. In this way, using that $V^2 \geq \htau_\ell \geq \tau_\ell \geq \sigma^2$, we obtain that 
    \begin{equation*}
        \E[ (  \hf_\ell^* ( \X + \tau_\ell \Z ) -  \X )^2 ] \lesssim  \E[(  f_\ell^* ( \X + \tau_\ell \Z ) -  \X )^2] + \frac{ V^2 B^2 }{ \sigma^6 } ( \htau_\ell - \tau_\ell )^2 + \frac{ V^2 B }{ \sigma^3 } | \htau_\ell - \tau_\ell | \,.
    \end{equation*}
    Combining this bound with Eq.~\eqref{eq:err1}, we obtain the bound. 
\end{proof}

\noindent The proof above relies on the following ``score perturbation lemma'' showing that the optimal denoiser is Lipschitz with respect to the noise scale.

\begin{lemma}[Score perturbation lemma]
\label{lem:perturb-conditional}
Let $p(\cdot; \tau)$ be the density function of $\X + \tau \Z$. Let the score function $\partial_1 \log p( \cdot ; \tau)$ be $B-$Lipschitz continuous. Then, the $L_2$ error between the optimal denoisers at two noise scales $\tau_\ell$ and $\htau_\ell$ is given by 
    \begin{equation*}
        \E \big[ ( \hf_\ell^* (\X + \tau_\ell \Z) - f^*_\ell( \X + \tau_\ell \Z ) )^2 \big] \leq \htau_\ell^2 B^2 ( \htau_\ell - \tau_\ell )^2 + \frac{ B^2 ( \htau_\ell - \tau_\ell )^4 }{ \tau_\ell^4 \htau_\ell^2 } V^2\,.
    \end{equation*}
\end{lemma}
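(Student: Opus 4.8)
The plan is to reduce the statement to a purely one-dimensional bound on how the posterior-mean denoiser moves with the noise scale, using Tweedie's formula, and then control that movement via the heat equation satisfied by the Gaussian-smoothed densities $p(\cdot;u)$. By Gaussian integration by parts against the kernel $\mathsf{N}(0,u^2)$, the optimal denoiser at scale $u$ is $y \mapsto y + u^2 s_u(y)$, where $s_u \triangleq \partial_1 \log p(\cdot;u)$ is the score of $p(\cdot;u)$. Hence $\hf_\ell^*(y) - f_\ell^*(y) = \htau_\ell^2 s_{\htau_\ell}(y) - \tau_\ell^2 s_{\tau_\ell}(y)$, and since $\X + \tau_\ell \Z \sim p(\cdot;\tau_\ell)$, the quantity to bound is $\E_{W\sim p(\cdot;\tau_\ell)}\bigl[(\htau_\ell^2 s_{\htau_\ell}(W) - \tau_\ell^2 s_{\tau_\ell}(W))^2\bigr]$. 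I would then split
\begin{equation*}
    \htau_\ell^2 s_{\htau_\ell} - \tau_\ell^2 s_{\tau_\ell} \;=\; (\htau_\ell^2 - \tau_\ell^2)\, s_{\tau_\ell} \;+\; \htau_\ell^2\,(s_{\htau_\ell} - s_{\tau_\ell})
\end{equation*}
and bound the two contributions separately after $(a+b)^2 \le 2a^2 + 2b^2$.

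The first contribution is routine: $\E_{W\sim p(\cdot;\tau_\ell)}[s_{\tau_\ell}(W)^2]$ is the Fisher information of $p(\cdot;\tau_\ell)$, and Gaussian integration by parts gives $\E[s_{\tau_\ell}(W)^2] = -\E[\partial_1 s_{\tau_\ell}(W)] \le B$ from the $B$-Lipschitzness of the score. Together with $(\htau_\ell^2 - \tau_\ell^2)^2 = (\htau_\ell-\tau_\ell)^2(\htau_\ell+\tau_\ell)^2 \le 4\htau_\ell^2(\htau_\ell - \tau_\ell)^2$ (using $\htau_\ell \ge \tau_\ell$), this contribution is $\lesssim \htau_\ell^2 B(\htau_\ell - \tau_\ell)^2$, matching the first term of the claim up to the precise power of $B$.

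The second contribution, $\htau_\ell^2\,\E_W[(s_{\htau_\ell}(W) - s_{\tau_\ell}(W))^2]$, is the crux and the main obstacle. To control the movement of the score across scales, I would use that $p(\cdot;u) = p(\cdot;\tau_\ell) \star \mathsf{N}(0, u^2 - \tau_\ell^2)$ is a heat flow, so $\partial_u p(\cdot;u) = u\,\partial_1^2 p(\cdot;u)$, which upon differentiating $\log p$ gives the viscous-Burgers identity $\partial_u s_u = u\,\partial_1(\partial_1 s_u + s_u^2)$. Hence $s_{\htau_\ell} - s_{\tau_\ell} = \int_{\tau_\ell}^{\htau_\ell} u\,\partial_1(\partial_1 s_u + s_u^2)\,du$, and Cauchy--Schwarz in the $u$-integral gives $\E_W[(s_{\htau_\ell}(W) - s_{\tau_\ell}(W))^2] \le (\htau_\ell-\tau_\ell)\int_{\tau_\ell}^{\htau_\ell} u^2\,\E_W\bigl[(\partial_1^2 s_u + 2 s_u\partial_1 s_u)(W)^2\bigr]\,du$. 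The difficulty is that the integrand involves $\partial_1^2 s_u$ — a third derivative of $\log p(\cdot;u)$, not controlled by the Lipschitz-score hypothesis — together with the nonlinear term $s_u^2$, and that the reference measure $p(\cdot;\tau_\ell)$ does not match the scale $u \ge \tau_\ell$ at which the score is taken. I would bound $\E_W[(\partial_1^2 s_u(W))^2]$ and $\E_W[s_u(W)^2]$ by exploiting the smoothing structure of $p(\cdot;u)$ (each spatial derivative of the mollified density costs a factor $1/u \le 1/\tau_\ell$, producing the $\tau_\ell^{-4}$), the sub-Gaussian tails of $p$ and the variance bound $\mathrm{Var}(\X)\le V$ (entering via $\E_W[W^2] = \E[\X^2] + \tau_\ell^2$), with integration by parts against $p(\cdot;\tau_\ell)$ and $|\partial_1 s_{\tau_\ell}|\le B$, $\E_W[s_{\tau_\ell}^2]\le B$ disposing of the remaining lower-order terms. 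Carefully tracking the resulting powers of $\htau_\ell-\tau_\ell$, $\tau_\ell$, $\htau_\ell$, $B$ and $V$ — the delicate bookkeeping — yields the higher-order term $\frac{B^2(\htau_\ell-\tau_\ell)^4}{\tau_\ell^4\htau_\ell^2}V^2$; summing the two contributions gives the claim. An alternative route for this piece, avoiding the PDE, is the conditional-expectation representation $s_{\htau_\ell} = \E[s_{\tau_\ell}(W)\mid W + \sqrt{\htau_\ell^2 - \tau_\ell^2}\,\Z' = \cdot]$, which exhibits $s_{\htau_\ell} - s_{\tau_\ell}$ as a conditional average of increments of the $B$-Lipschitz function $s_{\tau_\ell}$; but obtaining the sharp dependence on $\htau_\ell - \tau_\ell$ this way still requires a change-of-measure estimate between $p(\cdot;\htau_\ell)$ and $p(\cdot;\tau_\ell)$, so I expect the heat-equation route to be the cleaner one.
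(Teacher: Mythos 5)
Your opening reduction is the right one and matches the paper's: Tweedie's formula writes the optimal denoiser at scale $u$ as $y + u^2\,\partial_1\log p(y;u)$, so the claim becomes a bound on $\E_{W\sim p(\cdot;\tau_\ell)}\bigl[(\htau_\ell^2 s_{\htau_\ell}(W)-\tau_\ell^2 s_{\tau_\ell}(W))^2\bigr]$, and your first contribution \--- the $(\htau_\ell^2-\tau_\ell^2)s_{\tau_\ell}$ piece, handled by the Fisher-information integration by parts $\E[s_{\tau_\ell}^2]=-\E[\partial_1 s_{\tau_\ell}]\le B$ \--- is sound (the paper instead bounds this piece through $\E[(f_\ell^*(x)-x)^2]\le\E[\X^2]$ via the law of total variance, which is where $V$ enters its statement). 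The problem is the second contribution, $\htau_\ell^2\,\E_W[(s_{\htau_\ell}-s_{\tau_\ell})^2]$: this cross-scale score perturbation estimate \emph{is} the lemma, and you have not proved it. The paper does not prove it from scratch either \--- it imports precisely the inequality $\E_{p(\cdot;\tau_\ell)}[(s_{\tau_\ell}-s_{\htau_\ell})^2]\lesssim B^2(\htau_\ell-\tau_\ell)^2+B^2(\htau_\ell-\tau_\ell)^4\,\E[s_{\tau_\ell}^2]$ as Lemma~C.11 of the cited score-based-generative-modeling reference, and then only does the Tweedie rearrangement around it.

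Your primary route to that inequality does not close. The Burgers identity $\partial_u s_u = u\,\partial_1(\partial_1 s_u+s_u^2)$ forces you to control $\E_W[(\partial_1^2 s_u)^2]$, a \emph{third} derivative of $\log p(\cdot;u)$, and this is not implied by $B$-Lipschitzness of the score: the heuristic that each spatial derivative of the mollified density costs a factor $1/u$ applies to derivatives of $p(\cdot;u)$ itself, not to derivatives of its logarithm, whose higher derivatives can be large (e.g.\ in the transition region of a smoothed two-point or well-separated mixture prior) without violating the hypothesis. There is also the measure mismatch you note (expectation under $p(\cdot;\tau_\ell)$ of a score at scale $u>\tau_\ell$), which blocks the natural integration by parts that would lower the derivative order. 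You explicitly flag this step as ``the crux and the main obstacle'' and then assert that careful bookkeeping yields the claimed term \--- but that assertion is the entire content of the lemma. The workable argument is the one you relegate to an aside: the smoothed score is a conditional expectation of the finer-scale score, $s_{\htau_\ell}(y)=\E[s_{\tau_\ell}(W)\mid W+\sqrt{\htau_\ell^2-\tau_\ell^2}\,\Z'=y]$, and Lipschitzness of $s_{\tau_\ell}$ plus elementary Gaussian estimates gives the two-term bound; that is essentially how the cited lemma proceeds. Either carry that computation out or invoke the external lemma as the paper does; as written, your proposal correctly isolates the key inequality but leaves it unproven.
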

\begin{proof}
    Denote the probability density function of $\X + \tau_\ell \Z$ random variable at value $x$ as $p(x ; \tau_\ell)$. Assuming $\partial_1 \log p(\cdot; \tau_\ell)$ is $B$-Lipschitz function and using Lemma~C.11 of \cite{lee2022convergence}, we have
    \begin{equation*}
        \E_{x \sim p(\cdot; \tau_\ell)} \big[ (\partial_1 \log p(x; \tau_\ell) - \partial_1 \log p(x; \htau_\ell ) )^2 \big] \lesssim B^2 ( \htau_\ell - \tau_\ell )^2 +  B^2 ( \htau_\ell - \tau_\ell )^4 \E \big[ ( \partial_1 \log p(x; \tau_\ell) )^2 \big]. 
    \end{equation*}
    Using Tweedie's formula ($\partial_1 \log p(x; \tau_\ell) = (\E[ \X | \X + \tau_\ell \Z = x ] - x)/\tau_\ell^2$), we have 
    \begin{align*}
        &\E_{x \sim p(\cdot; \tau_\ell)} \big[ (\partial_1 \log p(x; \tau_\ell) - \partial_1 \log p(x; \htau_\ell ) )^2 \big] \\ 
        & \quad = \E_{x \sim p(x; \tau)} \Big[ \Bigl( \frac{ ( \htau_\ell^2 - \tau_\ell^2 ) ( \E[ \X | \X + \tau_\ell \Z = x ] - x ) + \tau_\ell^2 ( \E[\X | \X + \tau_\ell \Z = x] - \E[ \X | \X + \htau_\ell \Z = x ] ) }{ \tau_\ell^2 \htau_\ell^2 } \Bigr)^2 \Big].
    \end{align*}
    This implies that 
    \begin{align*}
        &\E \big[ ( \E[\X | \X + \tau_\ell \Z = x] - \E[ \X | \X + \htau_\ell \Z = x ] )^2 \big]\\ 
        & \quad \lesssim \htau_\ell^2 \E_{x \sim p(\cdot; \tau_\ell)} \big[ (\partial_1 \log p(x; \tau_\ell) - \partial_1 \log p(x; \htau_\ell ) )^2 \big] + \frac{ ( \htau_\ell^2 - \tau_\ell^2 )^2 }{ \tau_\ell^4 \htau_\ell^4 } \E[ ( \E[ \X | \X + \tau_\ell \Z = x ] - x )^2 ] \\
        & \quad \lesssim \htau_\ell^2 B^2 ( \htau_\ell - \tau_\ell )^2 + \frac{ B^2 ( \htau_\ell - \tau_\ell )^4 }{ \tau_\ell^4 \htau_\ell^2 } \E \big[ (\E[ \X | \X + \tau_\ell \Z = x ] - x)^2 \big]
    \end{align*}
    where the last inequality uses the fact that $\htau_\ell \geq \tau_\ell$ because $\tau_\ell$ is obtained using optimal denoiser. Additionally, we have $\tau_0 = \sigma^2 + (1/\delta)\E[ \X^2 ] \leq \tau^2 + V^2$. Using law of total variance, we have $\varrm[\X] = \E[ \varrm(\X | \X + \tau_\ell \Z) ] + \varrm ( \E[ \X | \X + \tau_
    \ell \Z] )$. Therefore, we have $\E[\X^2] \geq \E [ (\E[ \X | \X + \tau_\ell \Z = x ] - x)^2 ]$ which implies the claimed bound. 
\end{proof}

\subsection{Putting everything together}

We are now ready to conclude the proof of our main result.

\begin{proof}[Proof of Theorem~\ref{thm:formal-thm}]
    Recall that the state evolution recursion for $\tau_\ell$ is defined using the optimal denoiser $f_\ell^*$ at layer $\ell$ in Lemma~\ref{lemma:asymp-amp}. Similarly, the state evolution recursion for $\htau_\ell$ is applied using learned denoiser $\hf_\ell( \; \cdot \; , \theta_T )$. At some places, we will use $\hf_\ell( \; \cdot \; )$ to denote $\hf_\ell( \; \cdot \;, \theta_T )$ for brevity. By the state evolution recursion for $\tau_\ell$, the error between $\tau_\ell$ and $\htau_\ell$ is given 
    \begin{align*}
        \htau_\ell^2 - \tau_\ell^2 &= \frac{1}{\delta} ( \E[ ( \hf_{\ell-1} ( \X + \htau_{\ell-1} \Z ) - \X )^2 ] - \E[ f_{\ell-1}^* ( \X + \tau_{\ell-1} \Z ) - \X )^2 ] ) \\ 
        &\lesssim \frac{1}{\delta} \Bigl(  \frac{V^2 B^2}{ \sigma^6 } |\htau_{\ell-1} - \tau_{\ell-1}|^2 + \ve_1 + \ve_2 \Bigr),
    \end{align*}
    where the last inequality follows from Lemma~\ref{lem:l2-denoiser-err} and Lemma~\ref{lem:err-learned-denoiser-actual-denoiser}. As $\htau_\ell + \tau_\ell \geq 2 \sigma$ for all $\ell$ and $\htau_\ell \geq \tau_\ell$, we have
    \begin{equation*}
        | \htau_\ell - \tau_\ell | \lesssim \frac{V^2 B^2}{ \delta \sigma^7 } |\htau_{\ell-1} - \tau_{\ell-1}| + \frac{ \ve_1 + \ve_2 }{ \delta \sigma }.
    \end{equation*}
    Solving this recurrence, we have that the error in the state evaluation parameters after $L$ layers is upper bounded by 
    \begin{equation*}
        | \htau_L - \tau_L | \lesssim \Bigl( \frac{V^2 B^2}{ \delta \sigma^7 } \Bigr)^L (\ve_1 + \ve_2).
    \end{equation*}
    Combining this bound with Lemma~\ref{lem:l2-denoiser-err} and Lemma~\ref{lem:err-learned-denoiser-actual-denoiser}, we have
    \begin{equation*}
        \lim_{d \to \infty} \E_{x, A} \big[ \frac{1}{d} \|  \hf_\ell ( A^\top \hv_\ell + \hx_\ell, \theta_T ) - x \|^2 \big] \lesssim \E[ ( f_\ell^* ( \X + \tau_\ell \Z ) - \X )^2 ] + \Bigl( \frac{V^2 B^2}{ \delta \sigma^7 } \Bigr)^{L+1} (\ve_1 + \ve_2)\,.
    \end{equation*}
    Applying state evolution (Lemma~\ref{lemma:asymp-amp}), we get that
    \begin{align*}
        & \lim_{d \to \infty} \;\; \E_{x, A} \big[ \frac{1}{d} \|  \hf_\ell ( A^\top \hv_\ell + \hx_\ell, \theta_t ) - x \|^2 \big] \\ 
        \lesssim & \lim_{d \to \infty}  \E_{x, A} \big[ \frac{1}{d} \|  f_\ell^* ( A^\top v_\ell + x_\ell, \theta_t ) - x \|^2 \big] + \Bigl( \frac{V^2 B^2}{ \delta \sigma^7 } \Bigr)^{L+1} (\ve_1 + \ve_2)
    \end{align*}
    as claimed.
\end{proof}

\subsection{Bounding the complexity of denoisers}
\label{sec:complexity}

In Definition~\ref{def:denoiser-complexity}, we primarily focus on the setting where $C\alpha \ge 1$. Here we provide some bounds on $\complexity(\phi, \alpha)$ in this regime. This in turn provides the bound for $\samplecomplexity(\phi, \alpha)$. First note that we always have
\begin{equation}
    (\log(1/\epsilon)/i)^{i/2} (C\alpha)^i \le \exp(\log(1/\epsilon)C\alpha/2) = (1/\epsilon)^{O(C\alpha)}\,. \label{eq:expbound}
\end{equation}
First consider the case where $\phi = \sum_i c_i z^i$ is a degree-$k$ polynomial.
Eq.~\eqref{eq:expbound} then implies
\begin{equation}
    \complexity(\phi, \alpha) \lesssim k(1/\epsilon)^{O(C\alpha)}\max_i |c_i|\,. \label{eq:poly1}
\end{equation}
This bound is useful when the degree is high, e.g. when $k = \tilde{\omega}(C\alpha\log(1/\epsilon))$.
We also have the following naive bounds when the degree is small or intermediate. When $k \ge \log(1/\epsilon)$,
\begin{equation}
    \complexity(\phi, \alpha) \lesssim (\log(1/\epsilon)C\alpha)^{O(\log 1/\epsilon)}\max_{i \le \log(1/\epsilon)} |c_i| + (C\alpha)^k \max_{i > \log(1/\epsilon)} |c_i|\,. \label{eq:poly2}
\end{equation}
When $k \le \log(1/\epsilon)$,
\begin{equation}
    \complexity(\phi, \alpha) \lesssim (\log(1/\epsilon)C\alpha)^{O(k)} \max_i |c_i|\,. \label{eq:poly3}
\end{equation}

In the context of our learning result, it suffices for the true denoisers $f^*_\ell$ to be \emph{well-approximated} by functions of low complexity, e.g. low-degree polynomials.

In our setting of Lipschitz denoisers from Assumption~\ref{asm:smoothed-p}, we can get good approximation by low-degree polynomials via the following standard result:

\begin{lemma}[Jackson's theorem~\cite{jackson1911genauigkeit}]
    Let $k\in\mathbb{Z}$, $B, R\ge 0$. If a function $f: \R\to\R$ is $B$-Lipschitz, then there exists a polynomial $q$ of degree-$k$ for which
    \begin{equation}
        \sup_{|z|\le R} |f(z) - q(z)| \lesssim BR/k\,.
    \end{equation}
\end{lemma}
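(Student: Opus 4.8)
The plan is to prove the classical Jackson theorem by the standard route through trigonometric approximation and a convolution with the Jackson kernel, then transfer back by rescaling. We may assume $k \ge 1$ (the bound is vacuous otherwise) and $B, R > 0$ (if either vanishes, take $q = f$). First I would rescale to the canonical interval: set $g(t) = f(Rt)$ on $[-1,1]$, which is $(BR)$-Lipschitz; a degree-$k$ polynomial $p$ with $\sup_{|t|\le 1}|g(t) - p(t)| \lesssim BR/k$ pulls back to $q(z) = p(z/R)$, a degree-$k$ polynomial with $\sup_{|z|\le R}|f(z) - q(z)| \lesssim BR/k$. So it suffices to treat $[-1,1]$.

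Next I would reduce algebraic approximation on $[-1,1]$ to \emph{even trigonometric} approximation via the substitution $z = \cos\theta$. Define $F(\theta) = g(\cos\theta)$; since $|\cos\theta_1 - \cos\theta_2| \le |\theta_1 - \theta_2|$, the function $F$ is $2\pi$-periodic, even, and $(BR)$-Lipschitz. If $T(\theta) = \sum_{j=0}^k a_j \cos(j\theta)$ is an even trigonometric polynomial of degree $\le k$, then $T(\theta) = \sum_{j=0}^k a_j T_j(\cos\theta)$, where $T_j$ is the Chebyshev polynomial of degree $j$; hence $p(z) = \sum_{j=0}^k a_j T_j(z)$ is an algebraic polynomial of degree $\le k$ with $\sup_{|z| \le 1}|g(z) - p(z)| = \sup_\theta |F(\theta) - T(\theta)|$. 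It thus remains to approximate $F$ by an even trigonometric polynomial of degree $\le k$ to within $O(BR/k)$.

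Finally I would construct this polynomial as a convolution with the Jackson kernel. Let $n = \lfloor k/2\rfloor + 1$ and $K_n(t) = c_n\bigl(\sin(nt/2)/\sin(t/2)\bigr)^4$, with $c_n$ chosen so that $\int_{-\pi}^{\pi} K_n(t)\,dt = 1$; this $K_n$ is a nonnegative even trigonometric polynomial of degree $2(n-1) \le k$, and the standard estimates give $c_n \asymp n^{-3}$ and $\int_{-\pi}^\pi |t|\,K_n(t)\,dt \lesssim 1/n \lesssim 1/k$ (split the integral at $|t| \asymp 1/n$, using $|\sin(t/2)| \gtrsim |t|$ on $[-\pi,\pi]$ and $|\sin(nt/2)| \le \min(1, n|t|/2)$). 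Setting $T = K_n * F$, which is again an even trigonometric polynomial of degree $\le k$, the Lipschitz bound together with $\int K_n = 1$ yields $|F(\theta) - T(\theta)| \le \int_{-\pi}^\pi |F(\theta - t) - F(\theta)|\,K_n(t)\,dt \le BR\int_{-\pi}^\pi |t|\,K_n(t)\,dt \lesssim BR/k$, which closes the chain. The main obstacle is the kernel bookkeeping — confirming that $\bigl(\sin(nt/2)/\sin(t/2)\bigr)^4$ genuinely is a trigonometric polynomial of the claimed degree and establishing the normalization and first-moment bounds for $K_n$ — but this is routine once the integral $\int|t|\,K_n(t)\,dt$ is split into the region near the origin, where the integrand is $\lesssim n^4|t|$, and its complement, where it is $\lesssim |t|^{-3}$.
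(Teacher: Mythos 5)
The paper does not prove this lemma; it is quoted as a classical result with a citation to Jackson's original 1911 work, so there is no in-paper argument to compare against. Your proposal is a correct and complete rendition of the standard proof: the rescaling to $[-1,1]$, the passage to even $2\pi$-periodic functions via $z=\cos\theta$ and Chebyshev polynomials, and the convolution with the fourth-power Jackson kernel are all handled properly, and the key estimates ($c_n \asymp n^{-3}$, $\int_{-\pi}^{\pi}|t|K_n(t)\,dt \lesssim 1/n$, and the degree count $2(n-1)\le k$ for $n=\lfloor k/2\rfloor+1$) are exactly what is needed to close the $O(BR/k)$ bound.
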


\noindent As the true denoisers are Lipschitz and the prior $p$ is assumed to be $R$-sub-Gaussian so that the effective support over which the true denoisers must be approximated has radius $O(R)$, this implies that we can approximate them pointwise to error $\delta$ with polynomials of degree $k = O(R/\delta)$. To apply any of the complexity bounds in Eqs.~\eqref{eq:poly1}-~\eqref{eq:poly3} to these polynomials, it remains to bound the coefficient of largest magnitude. For this, we can apply a result like the following:

\begin{lemma}[Corollary of Lemma 4.1 from~\cite{sherstov2012making}]\label{lem:sherstov}
    Given a polynomial $q(z) = \sum^k_{i=0} c_i z^i$ for which $\sup_{z\in[0,1]} |q(z)| \le M$, 
    \begin{equation}
        \sum_i |c_i| \le 4^dM\,.
    \end{equation}
\end{lemma}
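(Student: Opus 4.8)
The plan is to recover this from the classical extremal theory of Chebyshev polynomials, which is precisely what~\cite[Lemma 4.1]{sherstov2012making} packages; what follows is just how the corollary comes out. The guiding principle is that a degree-$k$ polynomial bounded in sup-norm on a fixed interval has a Chebyshev expansion with bounded coefficients, and each (shifted) Chebyshev polynomial has an $\ell_1$ coefficient norm that is only exponentially large in $k$; combining these two facts through the triangle inequality in the coefficient $\ell_1$ norm controls $\sum_i |c_i|$.

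Concretely, I would proceed in three steps. First, pass to the Chebyshev basis adapted to $[0,1]$: writing $T^*_j(z) \triangleq T_j(2z-1)$ and expanding $q = \sum_{j=0}^k b_j T^*_j$, the substitution $z = (1+\cos\theta)/2$ turns $\sup_{z\in[0,1]}|q(z)|\le M$ into $\sup_\theta\bigl|\sum_j b_j\cos(j\theta)\bigr|\le M$, so orthogonality of $\{\cos(j\theta)\}$ on $[0,\pi]$ gives $|b_j|\le 2M$ for every $j$. Second, bound $\|T^*_j\|_1 \triangleq \sum_i\bigl|[\text{coeff of }z^i\text{ in }T^*_j]\bigr|$: writing $T^*_j = \sum_i t^{(j)}_i (2z-1)^i$ with $t^{(j)}_i$ the monomial coefficients of $T_j$, the elementary identities $\|(2z-1)^i\|_1 = 3^i$ and $\sum_i|t^{(j)}_i| \le (1+\sqrt2)^j$ (the latter a classical estimate, obtained by evaluating $T_j$ at $\pm\sqrt{-1}$, where the nonzero monomials all line up in sign) yield $\|T^*_j\|_1 \le \bigl(3(1+\sqrt2)\bigr)^j$. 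Third, the triangle inequality gives $\sum_i|c_i| \le \sum_{j=0}^k |b_j|\,\|T^*_j\|_1 \le 2M(k+1)\bigl(3(1+\sqrt2)\bigr)^k = \exp(O(k))\cdot M$, which is the claimed bound up to the precise base of the exponential and a harmless polynomial-in-$k$ factor; the constant can be sharpened (e.g.\ by checking directly from the recurrence $T^*_{j+1} = (4z-2)T^*_j - T^*_{j-1}$ that the coefficients of $T^*_j$ alternate in sign, so $\|T^*_j\|_1 = |T^*_j(-1)| = |T_j(3)| \le (3+2\sqrt2)^j$), or one may work on $[-1,1]$ directly, where no rescaling is needed and $\sum_i|t^{(j)}_i| \le (1+\sqrt2)^j \le 4^j$ already suffices.

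I do not expect any genuine obstacle; the content is standard. The only things requiring care are bookkeeping: (i) tracking the blow-up in the exponential base induced by rescaling $[0,1]$ to $[-1,1]$ --- this is immaterial here, since the sole use of Lemma~\ref{lem:sherstov} is to feed the coefficient bound into Eqs.~\eqref{eq:poly1}--\eqref{eq:poly3} after approximating the Lipschitz Bayes denoisers by polynomials of controlled degree via Jackson's theorem, and there one only needs $\sum_i|c_i| = \exp(O(k))$ with an absolute constant in the exponent; and (ii) justifying the sign pattern of the Chebyshev coefficients, so that the clean identity $\sum_i|t^{(j)}_i| = |T_j(\sqrt{-1})|$ can be used in place of estimating each coefficient individually --- this is classical and can be read off from the explicit formula for the coefficients of $T_j$.
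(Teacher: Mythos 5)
The paper offers no proof of Lemma~\ref{lem:sherstov} at all --- it is imported as a black box from \cite{sherstov2012making} --- so there is nothing to compare your argument against except the statement itself. Your Chebyshev argument is sound as far as it goes: the expansion $q=\sum_{j\le k} b_j T_j(2z-1)$ with $|b_j|\le 2M$ from orthogonality of $\{\cos(j\theta)\}$, the bound $\sum_i|t^{(j)}_i|\le(1+\sqrt2)^j$ from the sign-alternation of the coefficients of $T_j$, the identity $\|(2z-1)^i\|_1=3^i$, and the triangle inequality are all correct, as is your sharper observation that the coefficients of $T_j(2z-1)$ alternate in sign so that its coefficient $\ell_1$-norm equals $|T_j(-3)|=\tfrac12\bigl((3+2\sqrt2)^j+(3-2\sqrt2)^j\bigr)$.

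However, the discrepancy you dismiss as ``the precise base of the exponential'' is not a defect of your proof; it is a defect of the statement. As written (supremum over $[0,1]$, bound $4^kM$ --- note also the typo $4^d$ for $4^k$), the lemma is \emph{false}: $T_2(2z-1)=8z^2-8z+1$ is bounded by $1$ on $[0,1]$ yet has coefficient sum $17>4^2$, and in general the shifted Chebyshev polynomials show the optimal base is at least $3+2\sqrt2\approx 5.83$. Sherstov's Lemma~4.1 concerns polynomials bounded on $[-1,1]$, where $4^k$ is valid because each coefficient is individually dominated by the corresponding coefficient of $T_k$ or $T_{k-1}$ (V.~A.~Markov), giving $\sum_i|c_i|\le\bigl((1+\sqrt2)^k+(1+\sqrt2)^{k-1}\bigr)M\le 4^kM$; restricting the hypothesis to $[0,1]$ weakens it, and the constant must degrade. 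For the same reason, your fallback of ``working on $[-1,1]$ directly'' does not prove the stated lemma --- it proves a different one with a stronger hypothesis. None of this matters downstream: the lemma is used only to feed an $\exp(O(k))$ coefficient bound into Eqs.~\eqref{eq:poly1}--\eqref{eq:poly3}, where the base is absorbed into the $O(\cdot)$, so your bound $\sum_i|c_i|\le 2(k+1)(3+2\sqrt2)^kM$ serves exactly the same purpose. The honest fix is to restate the lemma with $(3+2\sqrt2)^k$ (or any $c^k$, $c\ge 3+2\sqrt2$) in place of $4^d$, or to state the hypothesis over $[-1,1]$ and rescale where it is applied.
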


\noindent In situations where the denoiser has additional smoothness properties, one can obtain even better polynomial approximations. To illustrate this, we provide an example in the special case of the $\mathbb{Z}_2$ prior:

\begin{example}
    When $p = \mathbb{Z}_2$, the optimal denoisers used in Bayes AMP are of the form $\tanh(\cdot / \tau^2)$ for various $\tau\ge \sigma$, where $\sigma^2$ is the variance of the measurement noise. This function is $B = 1/\sigma^2$-Lipschitz, and the effective support of $p\star\mathsf{N}(0,\tau^2)$ is of radius $R = O(1)$. By standard results, $\tanh(\cdot / \tau^2)$ can be $\eta$-approximated over $[-O(1),O(1)]$ with a polynomial of degree-$k = O(\log(\sigma/\eta)/\sigma^2)$, see e.g. Lemma 2 in~\cite{livni2014computational}. Applying Lemma~\ref{lem:sherstov} to bound the coefficients of this polynomial by $(\sigma/\eta)^{O(1/\sigma^2)}$ and then applying the bound in Eq.~\eqref{eq:poly2}, we conclude that the denoisers in the case of $\mathbb{Z}_2$ prior are $\eta$-approximated by polynomials $q$ of complexity
    \begin{equation}
        \complexity(q,\alpha) \lesssim (\sigma/\eta)^{O(1/\sigma^2)}\cdot \bigl((\log(1/\epsilon)C\alpha)^{O(\log 1/\epsilon)} + (C\alpha)^{O(\log(\sigma/\eta)/\sigma^2)}\bigr)\,.
    \end{equation}
\end{example}

\section{Experiments}

We now empirically demonstrate the performance of our proposed architecture and training scheme for unrolling Bayes AMP in a variety of statistical settings. Throughout these experiments, we are motivated by the following questions: \textbf{a)} Can our method empirically match the performance of Bayes AMP in settings where the latter is conjectured to be computationally optimal? \textbf{b)} In these settings, does our network learn the optimal denoisers? \textbf{c)} Are there settings where our methods offer a performance advantage over AMP?

\subsection{Compressed sensing}

\textbf{Implementation details.} We set $m = 250, d = 500$ and fix a random Gaussian sensing matrix $\A \in \R^{250 \times 500}$. We consider two choices of prior for our experiments: \textit{Bernoulli-Gaussian} and \textit{$\mathbb{Z}_2$} (i.e. uniform over $\{1,-1\}^n$).

\begin{figure}[h!]
    \centering
    \includegraphics[width=0.8\linewidth]{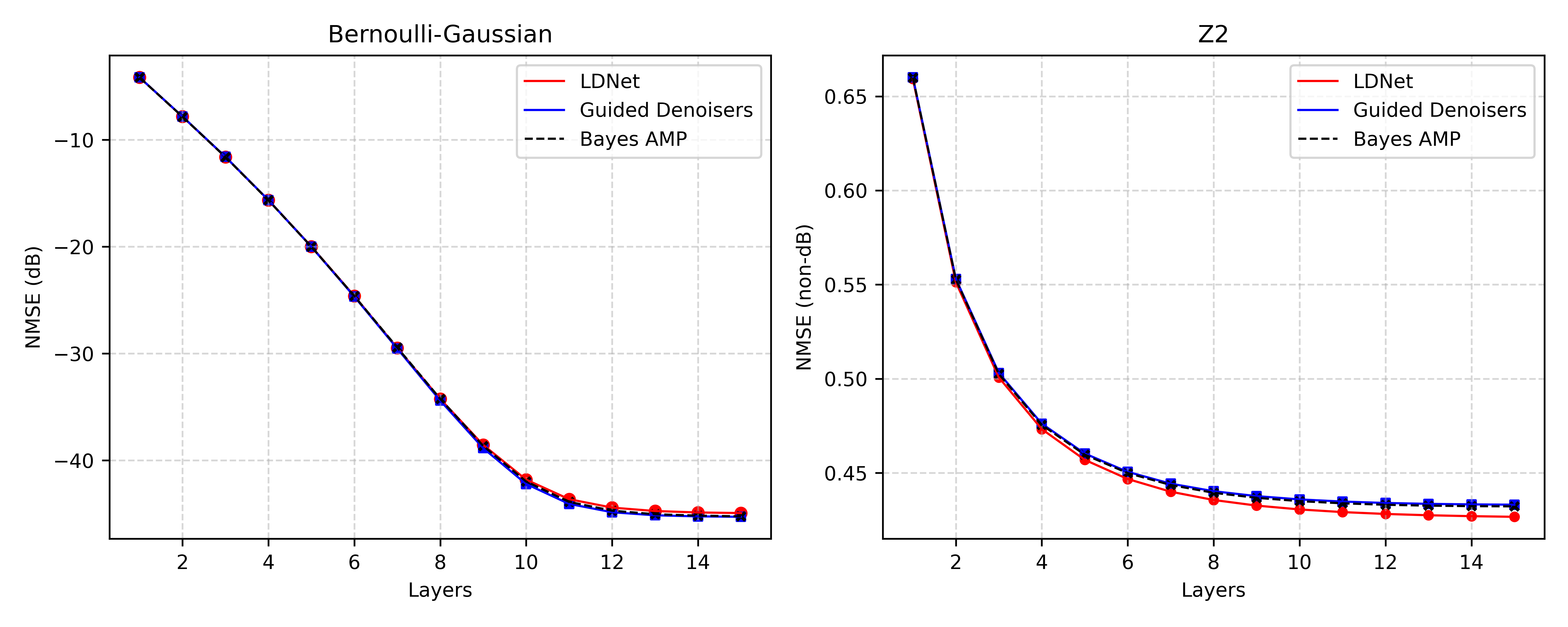}
    \captionsetup{font=small}
    \caption{\textbf{LDNet for Compressed Sensing}. On the left, we plot the NMSE (in dB) obtained by LDNet and  Bayes AMP baselines on the Bernoulli-Gaussian prior. On the right, we plot NMSE (not in dB) achieved on the $\mathbb{Z}_2$ prior. LDNet (along with the guided denoisers) achieves virtually identical performance to the conjectured computationally optimal Bayes AMP. }
    \label{fig:CS_MSE}
\end{figure}

For our unrolled architecture, the family $\mathcal{F}$ of learned MLP denoisers was restricted to three hidden layers, each with $70$ neurons and GELU activations. This particular architectural choice was the most convenient for our experiments, but our experimental findings are not particularly sensitive to this. We randomly generated a train and validation dataset $\{\y^i, \x^i\}_{i=1}^N$ with $N = 2^{15}$ samples by sampling from the prior and using Eq.~\eqref{eq:cs}. We train layerwise with finetuning as in Algorithm \ref{alg:lt}.

For each prior, we also implemented Bayes AMP using the corresponding optimal denoiser. As an additional ``semi-prior-aware'' baseline, we replace the MLP denoisers in LDNet with ``guided denoisers'' that have the same functional form as the optimal denoisers but contain trainable parameters; see Appendix \ref{apx:guided} for more details on the precise functional forms used. By convention, we report performance results for all methods by the normalized mean squared error (NMSE) $\frac{\norm{\wh{\x} - \x}_2^2}{\norm{\x}_2^2}$.

 \begin{figure}[b!]
    \centering
    \includegraphics[width=\linewidth]{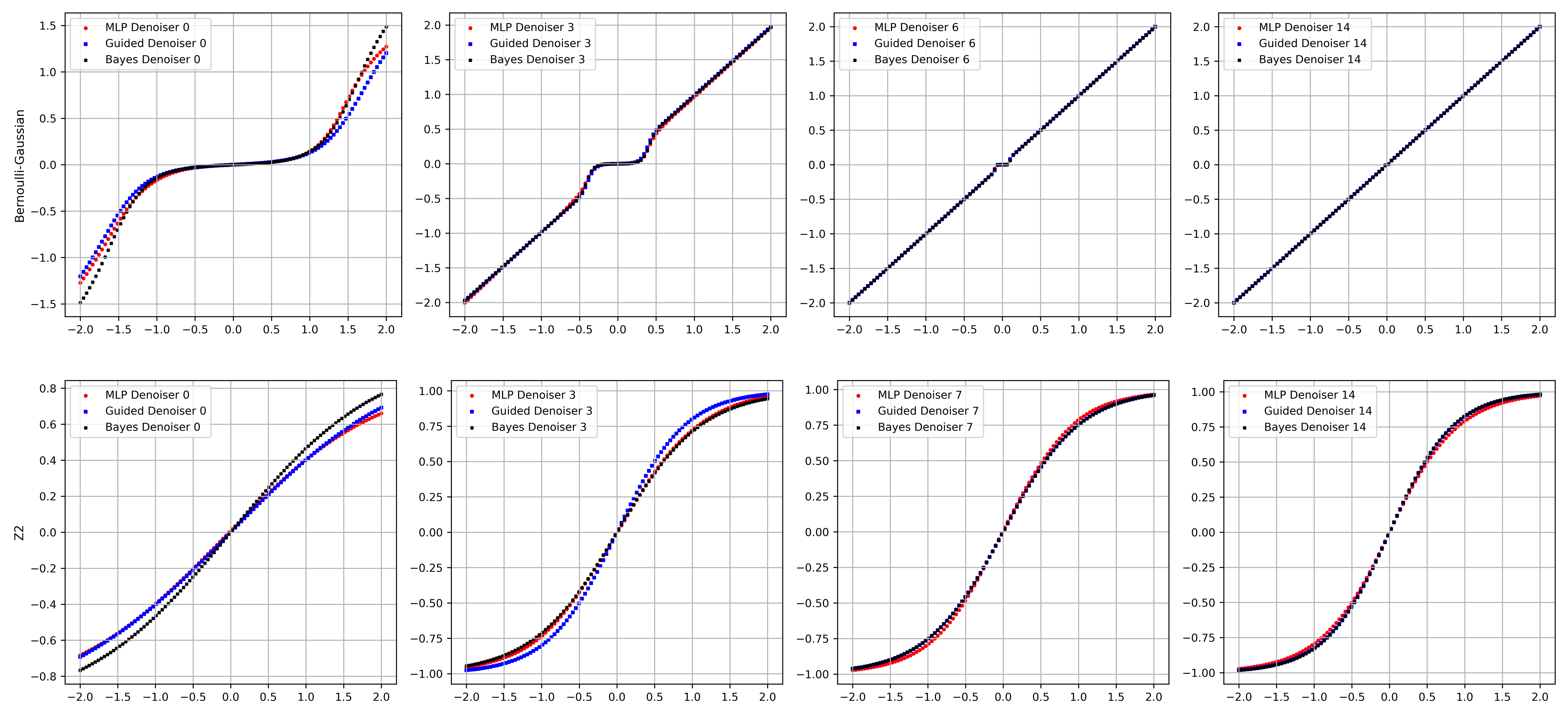}

    \captionsetup{font=small}
    \caption{\textbf{Learned Denoisers for Compressed Sensing}. We plot layerwise denoising functions learned by LDNet on the Bernoulli-Gaussian and $\mathbb{Z}_2$ priors relative to their optimal denoisers over a range of inputs in $(-2, 2)$. The state evolution input $\tau_{\ell}$ to each denoiser is set to be its empirical estimate.}
    \label{fig:den_plot}
\end{figure}

\bigskip
\noindent
\textbf{Bernoulli-Gaussian prior.} Here, each entry of $\x$ is independently drawn from a standard normal distribution and set to $0$ with probability $1-\varepsilon$; i.e., $p_{\sf x} = p^{\otimes d}$ where $p(x) = \varepsilon\, \mathsf{N}(0, 1; x) + (1 - \varepsilon)\,\delta(x)$, where $\delta$ denotes the Dirac delta at $x = 0$. To match the setting considered in the prior work of Borgerding et al.~\cite{borgerding2017amp}, we set the masking probability to be $\varepsilon = 0.1$ and the measurement noise to be $\sigma^2 = 2 \cdot 10^{-5}$.

We plot the NMSE that our unrolled network and baselines achieve in decibels (dB); that is, $10\log_{10}{\text{\rm (NMSE)}}$, in Figure \ref{fig:CS_MSE}. LDNet almost perfectly matches the NMSE of Bayes AMP at each layer/iteration. Over $15$ layers, our network converges to an NMSE of $\mathbf{-44.9313}$ \textbf{dB}, as compared to Bayes AMP converging to $\mathbf{-45.3280}$ \textbf{dB}. As the scale is logarithmic, the difference in error achieved is negligible.

\bigskip
\noindent
\textbf{$\bm{\mathbb{Z}_2}$ prior.} Here each entry of $\x$ is chosen from $\{-1, 1\}$ with probability $\frac{1}{2}$; i.e., $p_{\sf x} = p^{\otimes d}$ for $p(x) = \frac{1}{2}\delta_{-1}(x) + \frac{1}{2}\delta_{+1}(x)$.
To examine a higher noise regime and to ensure Bayes AMP converged within a reasonable number of iterations, we set the measurement noise to be $\sigma^2 = 0.075$. Figure~\ref{fig:CS_MSE} demonstrates that LDNet again recovers Bayes AMP at every iteration, even slightly outperforming by layer $15$, achieving an NMSE of $\mathbf{0.4267}$ (an improvement of $\mathbf{1.28 \%}$).

\bigskip
\noindent
\textbf{LDNet denoisers.} From Figure \ref{fig:den_plot} we can observe qualitatively that the learned MLP denoisers recover the functional form for the optimal denoiser at each iteration, as our theory suggests. Interestingly, although the denoisers were trained relative to a fixed sensing matrix, they appear to learn the Bayes AMP denoiser that is measurement-independent, and in Appendix~\ref{apx:transfer} we show that the performance of these learned denoisers actually transfers to other randomly drawn sensing matrices $\A$.

\begin{figure}[h!]
    \centering
    \includegraphics[width=\linewidth]{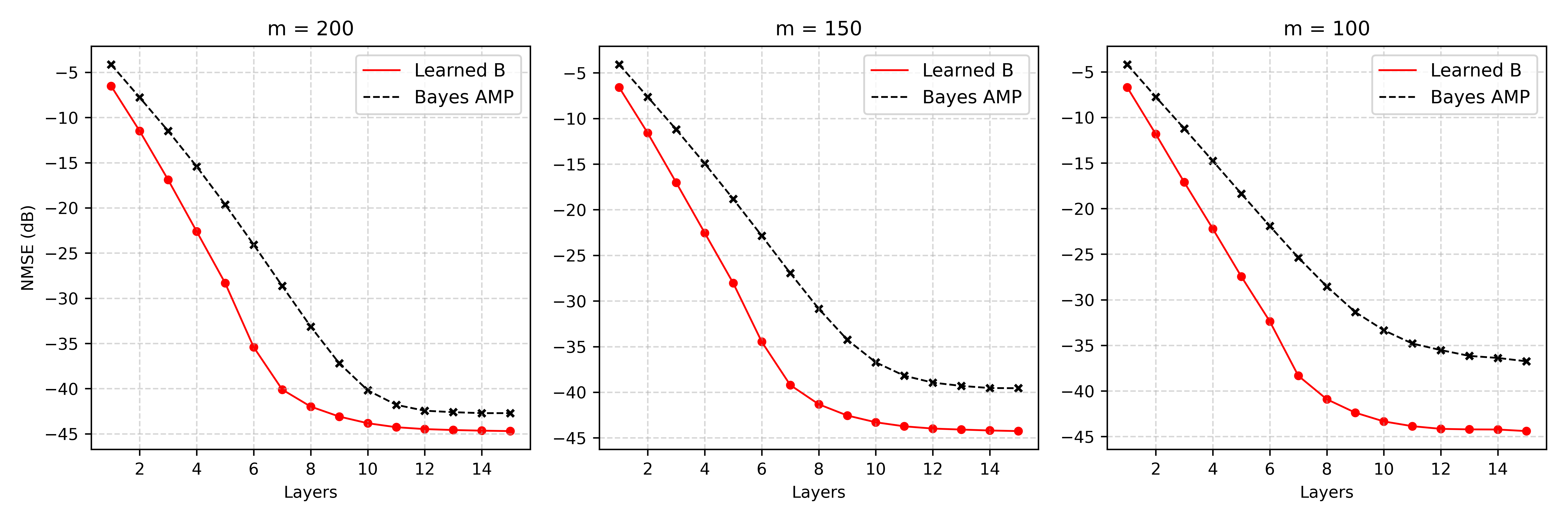}
    \captionsetup{font=small}
    \caption{\textbf{Learned B with Decreasing Dimension}. We hold $\delta = \frac{1}{2}$ fixed while scaling $m$ from $200$ down to $100$. Plots show NMSE (dB) performance of unrolling denoisers and learning $\mathbf{B}$ vs. Bayes AMP for randomly drawn measurement matrices. There is an increasing gap in performance as $m$ decreases.}
    \label{fig:scale_lb}
\end{figure}

\subsection{Learning auxiliary parameters}

Much of the algorithm unrolling literature focuses on learning \emph{auxiliary parameters} while using fixed denoisers, as opposed to learning the denoisers themselves. In particular, unrolled methods like LISTA~\cite{gregor2010learning} and LAMP~\cite{borgerding2017amp} reparameterize $\A^\top$ in Eqs.~\eqref{eq:xupdate} and~\eqref{eq:vupdate} as a new learnable matrix $\B$, which results in faster convergence than classical, learning-free counterparts like ISTA and AMP.

\begin{algorithm}
\caption{Learning $\B$}\label{alg:lb}
\KwIn{Training data $\mathcal{D}$, LDNet $\Psi$, Measurement matrix $\A$}
Initialize $\B = \A^\top$\;
\For{$\ell = 0$ to $\ell = L-1$}{
    \If{$\ell > 0$}{
        Initialize $\wh{f}_{\ell} \gets \wh{f}_{\ell-1}$\;}
    Freeze learnable weights in $\B$ and $\wh{f}_k$ for $k < \ell$\;
    Train $\Psi[0:\ell]$ on $\mathcal{D}$\;
    Unfreeze learnable weights in $\B$ and $\wh{f}_k$ for $k < \ell$\;
    Train $\Psi[0:\ell]$ on $\mathcal{D}$\;
    }
\KwOut{Fully trained $\Psi$, Learned $\B$}
\end{algorithm}

This does not necessarily contradict Bayes AMP's conjectured optimality for compressed sensing, which only applies in the $d\to\infty$ limit. In the context of our unrolling method, we posit that learning the matrix $\B$ can be thought of as learning finite dimensional corrections to the Bayes AMP iterations.  Figure \ref{fig:scale_lb} explores this interpretation, setting the parameters of $\B = \A^\top$ to be learnable alongside the layerwise denoisers (see Algorithm \ref{alg:lb}). Holding $\delta = \frac{1}{2}$ fixed while scaling down $m$ has the effect of widening the gap between the NMSE performance of Bayes AMP versus LDNet with trainable $\B$. Over five randomly drawn measurement matrices per dimension regime, we find that, on average, LDNet outperforms Bayes AMP by $\mathbf{7.2750 \%}$ when $m=200$, $\mathbf{16.4364 \%}$ when $m=150$, and $\mathbf{37.0605 \%}$ when $m=100$. Here, percentage is measured by $\frac{|\text{\rm Bayes} - \text{\rm LDNet}|}{|\text{\rm Bayes}|} \times 100\%$ in NMSE (dB).

\begin{figure}[h!]
    \centering
    \includegraphics[width=0.8\linewidth]{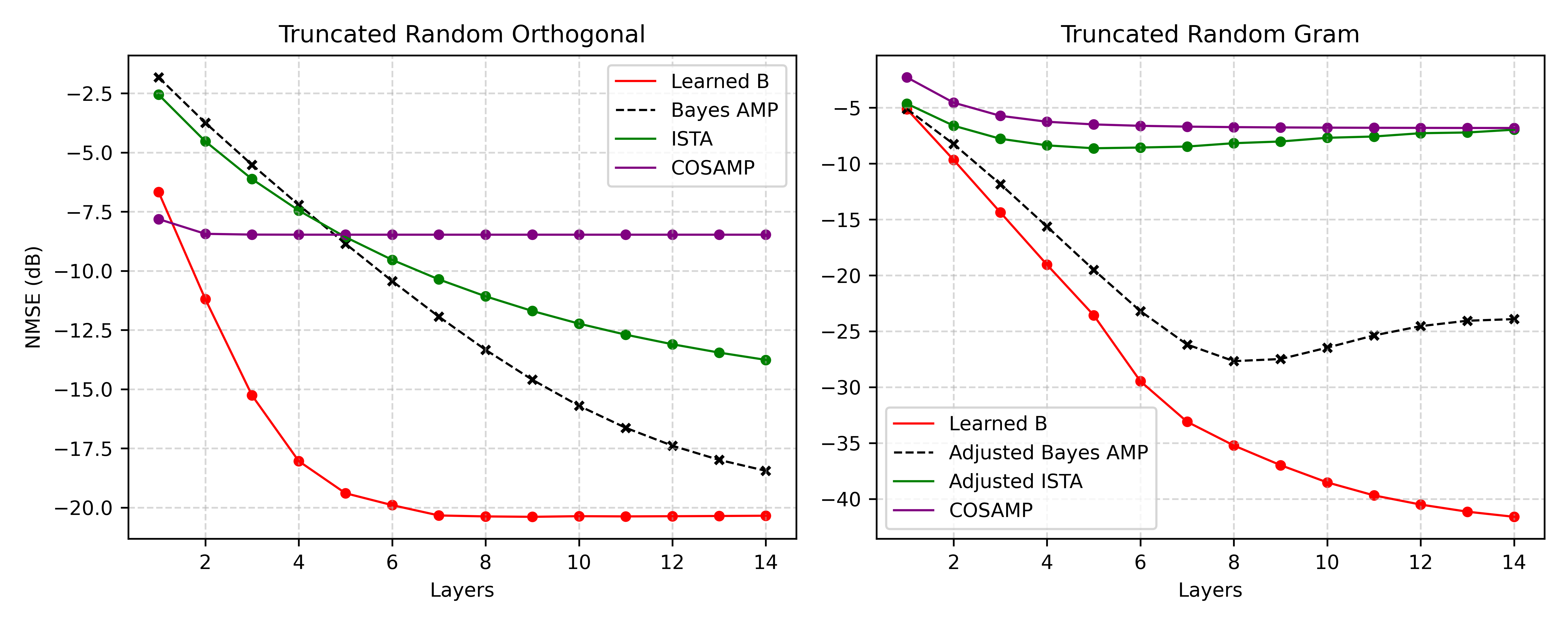}
    \captionsetup{font=small}
    \caption{\textbf{Non-Gaussian Measurements}. On the left, we plot LDNet with learnable $\B$ compared to several baselines for a random truncated orthogonal measurement matrix, and on the right, for a random truncated Gram matrix. LDNet outperforms the other baselines in NMSE as well as convergence.}
    \label{fig:ngwc}
\end{figure}

Another regime to which existing theory for Bayes AMP largely breaks down is when the entries of the sensing matrix $\A$ are non-Gaussian. While it was previously observed that learning $\B$ can help for ill-conditioned $\A$~\cite{borgerding2017amp}, we find that there are advantages even for well-conditioned (but non-Gaussian) sensing matrices. Figure \ref{fig:ngwc} plots NMSE for two sensing matrices $\Tilde{A} \in \R^{250 \times 500}$: one obtained by truncating a random orthogonal matrix $Q \in \R^{500 \times 500}$ (condition number $1$), and the other by truncating a Gram matrix $X^\top X \in \R^{500 \times 500}$ with $X_{ij} \sim \mathsf{N}(0, 1/m)$ (condition number $1091$). Also displayed are the iterative baselines of Bayes AMP, ISTA, and COSAMP \cite{needell2010cosamp}.  For the truncated random Gram matrix, Bayes AMP and ISTA actually \textit{diverge}, so we plot ``adjusted'' baselines replacing $\tilde{\A}^\top$ in Eqs.~\eqref{eq:xupdate} and \eqref{eq:vupdate} with the ``$\B$" matrix learned by LDNet. The baselines considered all drastically underperform our unrolled network. In fact, the underperformance of ``adjusted'' Bayes AMP demonstrates that the LDNet denoisers are strongly coupled with $\B$, suggesting that learning denoisers is beneficially composable with the traditional algorithm unrolling approach.

All told, we see how adding auxiliary learnable parameters can mitigate scenarios (e.g. finite dimensionality, non-Gaussian or  ill-conditioned sensing matrices) where Bayes AMP is suboptimal or not known to be optimal.

\subsection{Rank-one matrix estimation}
\label{sec:pca_experiments}

\textbf{Implementation details.} We set $d = 2^{10}$ and fix $\lambda = 1.5$. We focus on the \textit{Gaussian} and $\mathbb{Z}_2$ priors for this setting. The family of MLPs $\mathcal{F}$ is constrained to have three hidden layers, each with $20$ neurons and GELU activations, and we train over a dataset $\{\Y^i, \x^i\}_{i=1}^N$ of size $N = 2^{12}$ obtained by sampling from the prior and using Eq.~\eqref{eq:r1pca}. 

As with compressed sensing, we train with finetuning and also consider a baseline with the MLP denoisers in LDNet replaced with guided denoisers that learn parameters attached to the analytic forms of the Bayes-optimal denoisers.

\begin{figure}[h!]
    \centering
    \includegraphics[width=0.8\linewidth]{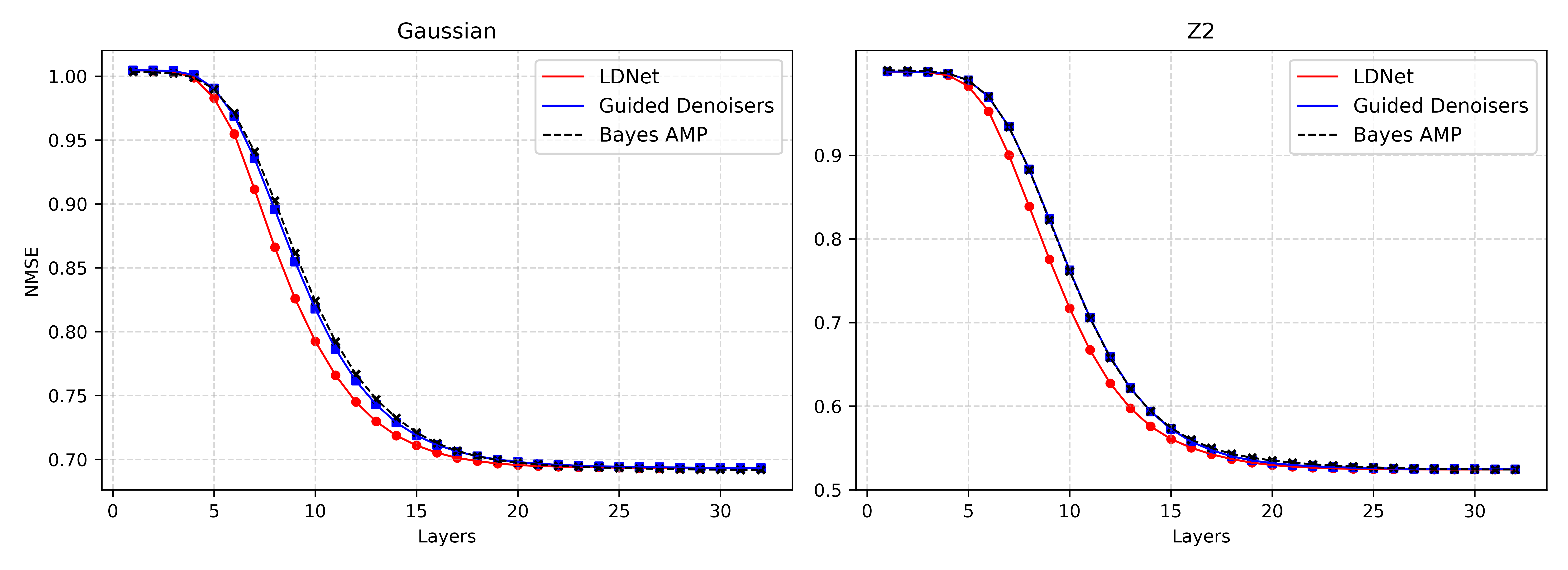}
    \captionsetup{font=small}
    \caption{\textbf{LDNet for Rank-One Matrix Estimation}.  On the left, we plot the NMSE obtained by LDNet and Bayes AMP on the Gaussian prior, while the right plots are on $\mathbb{Z}_2$. LDNet matches Bayes AMP with a slightly quicker convergence.}
    \label{fig:PCA_MSE}
\end{figure}

\bigskip
\noindent
\textbf{Gaussian prior.} For a Gaussian prior, each component $\x_i$ of $\x$ is drawn $x_i \sim \mathsf{N}(0, 1)$, so $p_{\sf x} = p^{\otimes d}$ for $p = \mathsf{N}(0, 1)$. As expected, LDNet tracks Bayes AMP to convergence at an NMSE of \textbf{0.6931} with a slightly quicker convergence.

\bigskip
\noindent
\textbf{$\mathbf{\mathbb{Z}_2}$ prior.} As with compressed sensing, each component of $\x$ is drawn from $\{-1, 1\}$ with probability $\frac{1}{2}$. Again, LDNet slightly outperforms Bayes AMP until convergence at an (information-theoretically optimal \cite{deshpande2014information}) NMSE of \textbf{0.5243}.

\begin{figure}[h!]
    \centering
    \includegraphics[width=\linewidth]{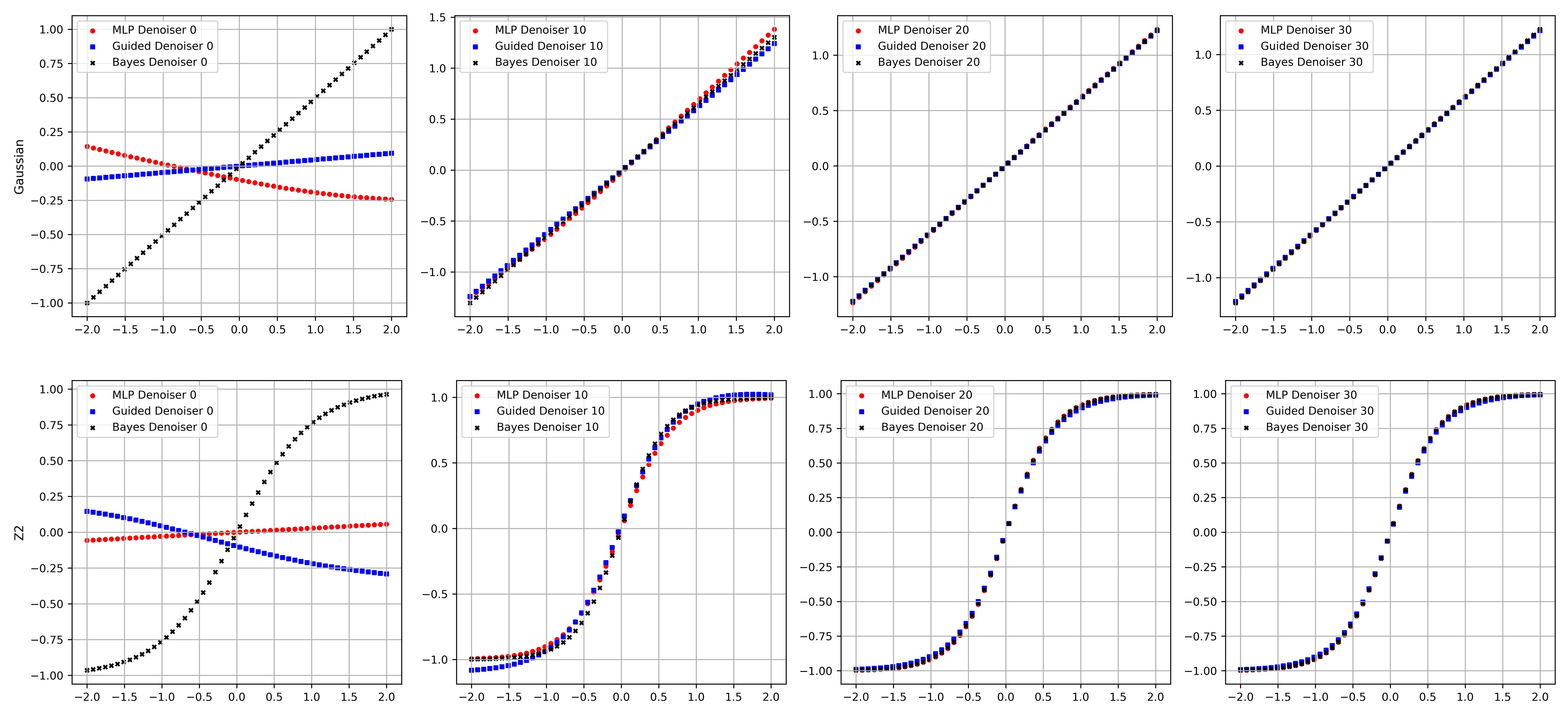}
    \captionsetup{font=small}
    \caption{\textbf{Learned Denoisers for Rank-One Matrix Estimation}.  We plot layerwise denoising functions learned by LDNet on the Gaussian and $\mathbb{Z}_2$ priors relative to their optimal denoisers over a range of inputs in $(-2, 2)$. The state evolution inputs $\mu_{\ell}, \tau_{\ell}$ to each denoiser are set to be their empirical estimates.}
    \label{fig:den_pca}
\end{figure}

\bigskip
\noindent
\textbf{LDNet denoisers.} While later iterations are able to recover the Bayes-optimal denoisers, it is worth noting the high approximation error at early iterations as shown in Figure \ref{fig:den_pca}. Early iterations correspond to when the AMP estimates stagnate at an NMSE of roughly $1.0$, corresponding to a random, uninformed signal that can accommodate high denoiser error. Around iteration $10$ is an inflection point when both LDNet and AMP  transition from the uninformed regime to convergence at an informed NMSE (Figure \ref{fig:PCA_MSE}), where low approximation error is tolerated.

\subsection{Non-product priors}

Thus far, our theoretical and experimental results have remained in the regime of product priors. But what happens when our underlying signal is drawn from a non-product distribution?

The modifications to AMP are minimal, as detailed in \cite{berthier2020state}, amounting to $d$-dimensional denoisers $f_{\ell} : \mathbb{R}^d \to \mathbb{R}^d$ and replacing the average derivative of the scalar denoiser in the Onsager term with the normalized divergence $\frac{1}{d}\text{\rm {div}} f_{\ell}$ in Eqs.~\eqref{eq:vupdate} and~\eqref{eq:r1vup}. In this \textit{non-separable} setting, AMP still satisfies a one-dimensional state evolution recursion~\cite{berthier2020state}. In fact, in the asymptotic limit, in some sense our theoretical guarantees carry over if for \textit{generic $d$-dimensional priors}, minimizing the score-matching objective via gradient descent (now in $d$ dimensions instead of $1$ dimension) can learn the Bayes-optimal denoiser with gradient descent. This is a question of immense interest within the theory and practice of diffusion generative modeling and remains an important open direction in this area.

In practice, for compressed sensing, unrolled AMP has been shown to be performant on image datasets \cite{metzler2017learned}, which serve as prime examples of real-world, non-product signal priors. For our purposes, we focus on rank-one matrix estimation which, even in the product setting, remains unexplored in the unrolling literature. Additionally, we work with handcrafted priors, where we can plot a baseline achieved by Bayes AMP.

\bigskip
\noindent
\textbf{LDNet for non-product priors.} We work in the low-dimensional regime $d=10$, where $\x \in \mathbb{R}^d$. LDNet requires small modifications to the layer iterations defined by Eqs.~\eqref{eq:xfwdpca} and~\eqref{eq:vfwdpca}. The family of MLPs $\mathcal{F}$ parametrizing the denoisers have three hidden layers with $1000$ neurons and GELU activations, with input dimension $d+2$ and output dimension $d$. We take $\wh{f}_{\ell} \in \mathcal{F}$ for $\ell \in [0, L-1]$, and we replace $\iprod{ \partial_1 \wh{f}_{\ell} }$ in Eq.~\eqref{eq:vfwdpca} with $\frac{1}{d}\text{\rm div} \wh{f}_{\ell}$. To avoid backpropagating through the Jacobian during training, we omit the finetuning step in Algorithm \ref{alg:lt}.

\bigskip
\noindent
\textbf{Signal distributions.} To analyze the performance of $d$-dimensional LDNet, we consider two priors on $\x$: product $\mathbb{Z}_2$ and a \textit{mixture of Gaussians}. In both instances we work with a dataset of size $N = 2^{12}$ generated by sampling a signal and using Eq.~\eqref{eq:r1pca}. The product $\mathbb{Z}_2$ prior serves as a test example to see whether the multi-dimensional learnable denoiser provides additional performance gain over Bayes AMP when treating the product distribution as $d$-dimensional as opposed to one-dimensional. 

Mixture of Gaussians provide a quite general class of non-product priors. We consider $p = \frac{1}{2}\mathsf{N}(\mu_1, \Sigma_1; x) + \frac{1}{2}\mathsf{N}(\mu_2, \Sigma_2; x)$, where we choose $\mu_1$ and $\mu_2$ at random with each coordinate chosen from $\mathsf{N}(0, 1)$, ensuring that $\mu_1$ and $\mu_2$ do not define the same direction. For each of the covariance matrices, we begin by a choosing random vector with each coordinate drawn uniformly from $[1, 2]$ and normalize so that vector has norm $\sqrt{d}$. We take this to be the diagonalization (i.e. eigenvalues) of the covariance, and conjugate by a randomly drawn orthogonal matrix.

\begin{figure}[h!]
    \centering
    \includegraphics[width=0.8\linewidth]{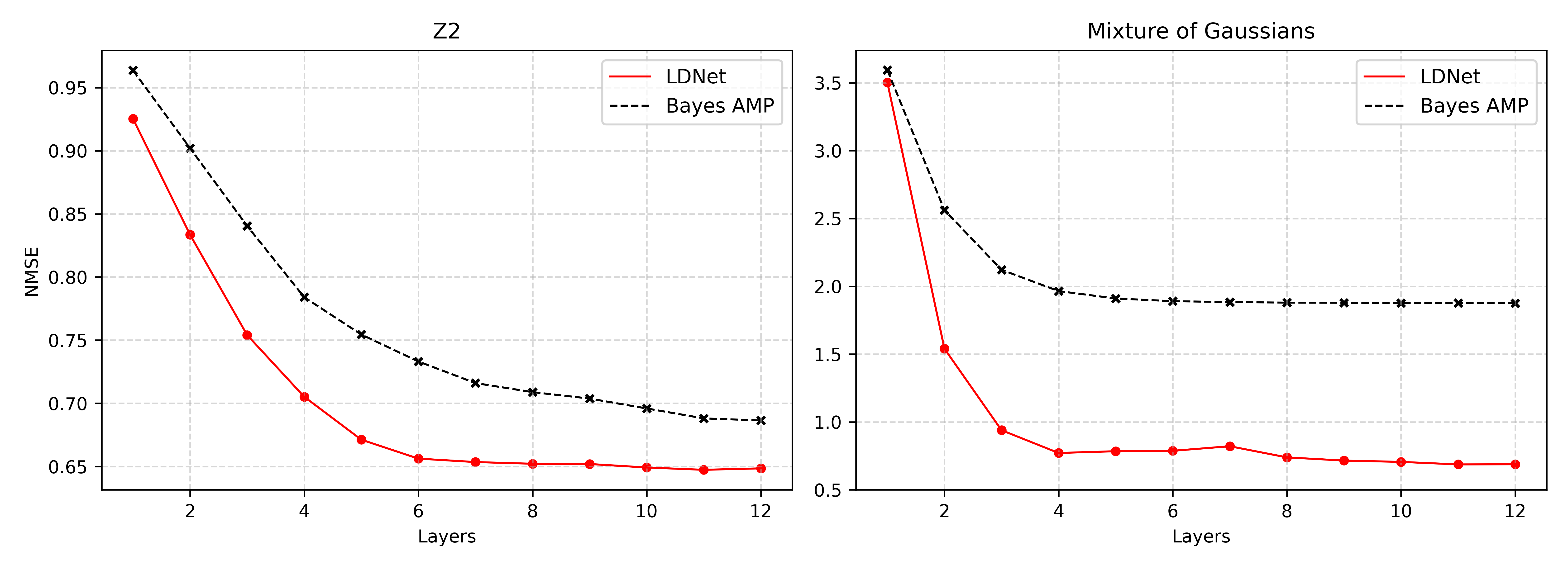}
    \captionsetup{font=small}
    \caption{\textbf{Multi-Dimensional LDNet for Rank-One Matrix Estimation}. On the left, we plot the NMSE obtained by LDNet and Bayes AMP on $\mathbb{Z}_2$, while the right plots are on the mixture of Gaussians. LDNet outperforms Bayes AMP by significant margins.}
    \label{fig:mdim}
\end{figure}

As Figure \ref{fig:mdim} demonstrates, multi-dimensional LDNet significantly outperforms the Bayes AMP baseline on both priors. On a product $\mathbb{Z}_2$ prior, LDNet achieves an NMSE of $\mathbf{0.6485}$ compared to Bayes AMP's $\mathbf{0.6864}$, marking a $\mathbf{5.52 \%}$ improvement while also reaching convergence much faster. On the mixture of Gaussians prior, LDNet achieves NMSE $\mathbf{0.6881}$ compared to Bayes AMP's $\mathbf{1.8757}$, marking a $\mathbf{63.32 \%}$ improvement.

\subsection{Learned denoiser dependence on sensing matrix}\label{apx:transfer}
\begin{figure}[h!]
    \centering
    \includegraphics[width=0.5\linewidth]{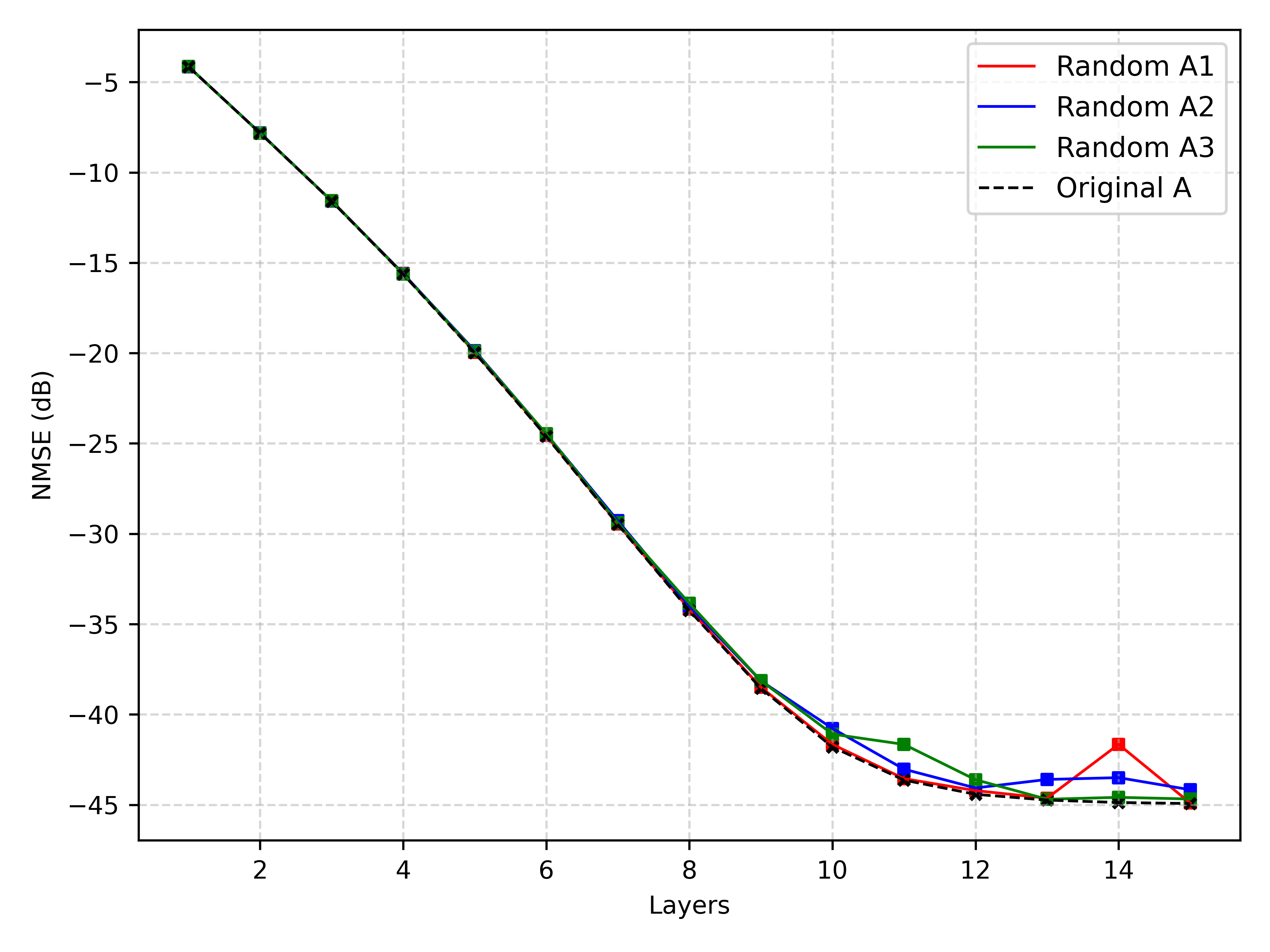}
    \captionsetup{font=small}
    \caption{\textbf{Transfer Experiments }. Above we plot the NMSE (in dB) over $15$ iterations for different choices of measurement matrices coupled with our learned MLP denoisers, including the training-time sensing matrix. We see that the denoising functions are roughly transferable to several random Gaussian measurement settings, suggesting the learning process is not coupled to the fixed sensing matrix seen during training.}
    \label{fig:transfer}
\end{figure}

Recall that our unrolled denoising network differs from the theoretical setting in two key ways: \textbf{a)}, the network is trained assuming a fixed sensing matrix $\A$ rather than in expectation over random Gaussian $\A$, and \textbf{b)}, state evolution parameters are estimated from previous iterates to account for finite dimension corrections.

Indeed, despite these differences, the plots in Figure \ref{fig:den_plot} suggest our network appears to learn a fundamental ``optimal'' denoiser that is independent of $A$. To further verify this claim, we froze the learned MLP denoiser weights for the Bernoulli-Gaussian prior and replaced the $A$ matrix in Eqs.~\eqref{eq:xfwd} and~\eqref{eq:vfwd} with other randomly sampled Gaussian matrices $A' \in \R^{250 \times 500}$. As shown in Figure \ref{fig:transfer}, this leads to minimal changes in the NMSE profile.

\section{Outlook}

In this work we gave the first proof that unrolled denoising networks can compete with optimal prior-aware algorithms simply via gradient-based training on data. Our proof used a novel synthesis of state evolution with NTK theory, and notably, the level of overparametrization needed for our result to hold is independent of the dimension, unlike existing results in the NTK literature. One important consequence of these results is that a one-dimensional score function is learnable with gradient descent, for which only representational, as opposed to algorithm, results existed previously in the literature \cite{mei2023deep}.

We supplemented our theory with extensive numerical experiments, confirming that LDNet  can recover Bayes AMP performance and Bayes-optimal denoisers without knowledge of the signal prior. Moreover, for various settings where Bayes AMP is not conjectured to perform optimally \--- e.g. inference in low dimensions, non-Gaussian designs, and non-product priors \--- we demonstrate that LDNet outperforms Bayes AMP. We thus establish unrolling denoisers as a powerful, practical addition to the algorithmic toolkit for Bayesian inverse problems.

One limitation is that our theoretical results are currently limited to the product prior setting. The non-product setting is difficult because even though state evolution is known here~\cite{berthier2020state}, proving an unrolled network converges to the right denoisers essentially amounts to proving that one can learn the score functions of a general data distribution. Additionally, it is subtle to define the right architecture, as the denoisers are no longer scalar, and a generic feedforward architecture would be difficult to prove rigorous guarantees for (and to scale in practice).

In addition, our theoretical results do not immediately extend to the rank-one matrix estimation setting. While closeness in denoising error implies closeness in the state evolution parameter $\tau$ for compressed sensing, this is not immediate for rank-one matrix estimation, where multiple choices for parameters $\mu$ and $\tau$ lead to the same denoising error. This is reflected in Figure \ref{fig:den_pca}, where the learned denoisers at early iterations achieve the same MSE as the Bayes optimal denoisers, but the functional forms are completely different. We leave the extension of our compressed sensing results to rank-one matrix estimation as an open question.

Finally, while our experiments suggest that including auxiliary trainable parameters like the ``$\mathbf{B}$ matrix'' offers significant performance advantages once one departs from the asymptotic, Gaussian design setting in which Bayes AMP is believed to be optimal, these are not yet supported by theory. It is an intriguing open question whether one can use some of the insights from the aforementioned representational results for ISTA to rigorously characterize the ``non-asymptotic corrections'' that these extra learnable parameters are imposing.

\section*{Acknowledgments}

AK and SC thank Demba Ba for illuminating discussions about unrolling at an early stage of this project. KS and SC thank Vasilis Kontonis for many fruitful conversations about provable score estimation. AK is supported by the Paul and
Daisy Soros Fellowship for New Americans. KS is supported by the NSF AI Institute for Foundations of Machine Learning (IFML). 

\bibliographystyle{alpha}
\bibliography{refs}


\newpage

\appendix

\section{Further related work}

\paragraph{Theory for unrolling ISTA.} The existing theory for algorithm unrolling almost exclusively focuses on unrolled ISTA (often called LISTA) and compressed sensing. The focus of these works is rather different from ours. For starters, they do not work in a Bayesian setting: the signal $\x$ is a deterministic vector assumed to have some level of sparsity, and the goal is to converge to $\x$. As mentioned above, these results do not prove convergence guarantees for the \emph{learning algorithm}. Instead, most of them argue that under certain settings of the weights in the LISTA architecture (and variants), the resulting estimator computed by the network can be more iteration-efficient than vanilla ISTA~\cite{moreau2016understanding,chen2018theoretical,xin2016maximal,liu2019alista,chen2021hyperparameter}. 

For example, \cite{moreau2016understanding} showed that if each layer performs a proximal splitting step with respect to a Gram matrix which admits a factorization with certain nice properties, then the iterates computed by each layer converge to $\x$ at an accelerated rate. \cite{chen2018theoretical} showed that if the learned weights are such that the activations converge to the ground truth vector, then they have to have a certain structure; under a specialized LISTA architecture that imposes this structure, they prove there is a setting of weights for which the iterates converge at a linear rate. \cite{xin2016maximal} showed a similar result for iterative hard thresholding and argued that the learned parameters can potentially reduce the RIP constant of the sensing matrix and thus speed up convergence. \cite{liu2019alista} identified a certain tied parametrization of the weights for LISTA that can also achieve linear convergence with fewer trainable parameters (see also the follow-up~\cite{chen2021hyperparameter}).

Apart from these works, the works of~\cite{shultzman2023generalization,schnoor2023generalization,behboodi2022compressive} proved generalization bounds for unrolled networks. These guarantees pertain to questions of sample complexity for empirical risk minimization, instead of computational complexity of learning these networks via gradient descent, and are thus orthogonal to the thrust of our work.

Finally, recent work of~\cite{shah2023optimization} studied optimization aspects of LISTA, and their main theorem was a bound on the Hessian of the empirical risk of an unrolled ISTA network in a neighborhood around random initialization. Under the unproven condition that the associated NTK at initialization is sufficiently well-conditioned, this would imply that the empirical risk satisfies a modified Polyak-Lojasiewicz inequality around initialization and thus the network would converge exponentially quickly to the empirical risk minimizer. While we also draw upon tools from the NTK theory, our focus is not just on optimizing the empirical loss, but on proving that the learned network generalizes to achieve mean squared error competitive with the theoretically optimal prior-aware algorithm, AMP. In addition, our results our end-to-end and apply to unrolled AMP instead of unrolled LISTA.

\paragraph{Learned AMP.} Borgerding et al.~\cite{borgerding2016onsager,borgerding2017amp} were the first to propose unrolling AMP for compressed sensing. In contrast to the architecture we consider, they primarily considered fixed soft-thresholding denoisers $\eta_{\sf st}(\cdot;\lambda)$ with trainable parameter $\lambda$ and trainable weight matrices playing the role of $\A^\top$ in the AMP update (see Eqs.~\eqref{eq:xupdate} and~\eqref{eq:vupdate}). They found empirically that their unrolled network outperforms AMP with soft-thresholding denoisers. They also considered unrolling vector AMP, a more powerful version of AMP, and showed that even when the sensing matrix is ill-conditioned, the network essentially matches the performance of vector AMP. They also considered some simple parametric families of denoisers like 5-wise linear functions and splines and found that they could approach the performance of the ``oracle'' estimator that knows the support of the underlying signal.

\paragraph{Comparison to LDAMP~\cite{metzler2017learned}.} Among the various direct follow-ups to~\cite{borgerding2016onsager,borgerding2017amp}, e.g.~\cite{ito2019trainable,musa2021plug,metzler2017learned}, the most relevant to ours is the follow-up work of \cite{metzler2017learned} extended this to denoisers given by convolutional neural networks of nontrivial depth (roughly 20 layers). They experimentally demonstrated that these networks performed quite well on compressive image recovery tasks. Using a proof technique of~\cite{metzler2016denoising} and under a certain monotonicity assumption on the score functions of the data distribution, they showed that Bayes AMP is Bayes optimal (see Lemma 1 therein).

While the fact that they employ a generic architectures for the denoiser step in AMP and demonstrate the effectiveness of the resulting unrolled architecture is closely related to the spirit of the present work, there are important differences. We note that their theoretical result does not not explain how unrolled AMP, when trained on data with gradient descent, learns to compete with Bayes AMP, only that in certain situations, optimally tuning the denoisers in AMP can achieve Bayes optimality. Furthermore, the monotonicity assumption they make is restrictive (e.g. it does not even apply to the two-point prior given by the uniform distribution over $\{\pm 1\}$). Furthermore, in general Bayes AMP need not be Bayes optimal, i.e. in situations where there is a computational-statistical gap~\cite{bandeira2018notes}.

On the experimental side, our focus was on synthetic setups instead of image recovery, with an emphasis on probing which aspects of the problem setup and which trainable parameters allow unrolled AMP to outperform Bayes AMP.

\section{Explicit expressions for various Bayes-optimal denoisers}\label{apx:guided}
For prior $p$ and $\X \sim p$, $\Z \sim \mathsf{N}(0, 1)$, the Bayes optimal denoiser in Bayes AMP is given by 
\begin{equation}
    f^*_{\ell} = \mathbb{E}[\X | \X + \tau_\ell \Z = y]
\end{equation}
for compressed sensing and 
\begin{equation}
    f^*_{\ell} = \mathbb{E}[\X | \mu_\ell\X + \sqrt{\tau_\ell} \Z = y]
\end{equation}
for rank-one matrix estimation.
For the priors examined in our experiments, we write out the setting-specific optimal denoiser along with the parameterized guided denoiser form (if relevant), where the parameters are learnable during training.

\bigskip
\noindent
\paragraph{Bernoulli-Gaussian prior.} One can compute the optimal denoiser to be 
\begin{equation}
    f^*_{\ell}(y) = \frac{y}{\left(1 + \tau_{\ell}^2\right)\left(1 + \frac{1-\varepsilon}{\varepsilon}\frac{\mathsf{N}(0, \tau_{\ell}^2; y)}{\mathsf{N}(0, \tau_{\ell}^2 + 1; y)}\right)}.
\end{equation}
and parameterize the denoiser via 

\begin{equation}
\wh{f}_{\ell}(y;\theta_1, \theta_2) = \frac{y}{\left(1 + \frac{\tau_{\ell}^2}{\theta_1}\right)\left(1 + \sqrt{1 + \frac{\theta_1}{\tau_{\ell}^2}}\exp\left(\theta_2 - \frac{y^2}{2(\tau_{\ell}^2 +{\tau^4_{\ell}}/{\theta_1})}\right)\right)},
\end{equation} as done in~\cite{borgerding2016onsager,borgerding2017amp}.

\bigskip
\noindent
\paragraph{$\mathbb{Z}_2$ prior.} The compressed sensing optimal denoiser \cite{deshpande2014information} can be written as
\begin{equation}
    f^*_{\ell}(y) = \tanh{\Bigl(y \cdot \frac{1}{\tau_\ell^{2}}\Bigr)},
\end{equation}
which we parameterize as \begin{equation}\wh{f}_{\ell}(y; \beta) = \tanh{\Bigl(y \cdot \beta \frac{1}{\tau_\ell^2}\Bigr)}.\end{equation} For rank-one matrix estimation, the optimal denoiser can be similarly computed to be
\begin{equation}
    f^*_{\ell}(y) = \tanh{\Bigl(y \cdot \frac{\mu_\ell}{\tau_\ell}\Bigr)},
\end{equation}
parametrized as
\begin{equation}
    \wh{f}_{\ell}(y; \beta) = \tanh{\Bigl(y \cdot \frac{\beta\mu_\ell}{\tau_\ell}\Bigr)}.
\end{equation}

\bigskip
\noindent
\paragraph{Gaussian prior.} For rank-one matrix estimation, the optimal denoiser for a Gaussian prior is
\begin{equation}
    f^*_{\ell}(y) = y \cdot \frac{\mu_\ell}{\mu_\ell^2 + \tau_\ell}
\end{equation}
parametrized as 
\begin{equation}
    \wh{f}_{\ell}(y; \beta) = y \cdot \beta\frac{\mu_\ell}{\mu_\ell^2 + \tau_\ell}.
\end{equation}

\bigskip
\noindent
\textbf{Mixture of Gaussians prior.} The calculation of the Bayes-optimal denoiser for a mixture of Gaussians prior in rank-one matrix estimation is slightly more involved, so we provide some more details about the calculation. Given $p = \frac{1}{k}\sum_{i=1}^k\mathsf{N}(\mu_i, \Sigma_i; x)$, where $\mu_i \in \mathbb{R}^d$ and $\Sigma_i \in \mathbb{R}^{d \times d}$ are invertible positive semidefinite symmetric covariance matrices, convolution with $\sqrt{\tau_\ell}\Z \sim \mathsf{N}(0, \tau_\ell I_d)$ results in the mixture of Gaussians distribution $\tilde{p} = \frac{1}{k}\sum_{i=1}^k\mathsf{N}(\mu_\ell\mu_i, \mu_\ell^2\Sigma_i + \tau_tI_d; x)$. For any mixture $\sum_{i=1}^k \lambda_i \mathsf{N}(\mu_i, Q_i; x)$, the score is given by \cite{chen2024learning}
\begin{equation}
    -\sum_{i=1}^k \left(\frac{\lambda_i \mathsf{N}(\mu_i, Q_i; x)}{\sum_j \lambda_j \mathsf{N}(\mu_j, Q_j; x)}\right)Q^{-1}_i (x - \mu_i).
\end{equation}
Thus, we have 
\begin{equation}
    \nabla \log \tilde{p}(x) = -\sum_{i=1}^k \frac{\mathsf{N}(\mu_\ell\mu_i, \mu_\ell^2\Sigma_i + \tau_tI_d; x)}{\sum_j \mathsf{N}(\mu_\ell\mu_j, \mu_\ell^2\Sigma_j + \tau_tI_d; x)}(\mu_\ell^2\Sigma_i + \tau_tI_d)^{-1}(x - \mu_\ell\mu_i),
\end{equation}
so by Tweedie's formula, the posterior mean on $\X$ given $\mu_\ell \X + \sqrt{\tau_\ell} \Z = y$ is 
\begin{equation}
    \frac{1}{\mu_\ell} \cdot \left(y - \left( \tau_\ell \cdot \sum_{i=1}^k \frac{\mathsf{N}(\mu_\ell\mu_i, \mu_\ell^2\Sigma_i + \tau_tI_d; y)}{\sum_j \mathsf{N}(\mu_\ell\mu_j, \mu_\ell^2\Sigma_j + \tau_tI_d; y)}(\mu_\ell^2\Sigma_i + \tau_tI_d)^{-1}(y - \mu_\ell\mu_i) \right)\right).
\end{equation}


\end{document}